
\documentclass{article}

\usepackage{microtype}
\usepackage{graphicx}
\usepackage{subfigure}
\usepackage{booktabs} 

\usepackage{amssymb}
\usepackage{amsmath}
\usepackage{graphicx}
\usepackage{amsmath}
\usepackage{amsthm}
\usepackage{booktabs}
\usepackage{algorithm}
\usepackage{algorithmic}
\usepackage{enumitem}


\newtheorem{theorem}{Theorem}
\newtheorem{lemma}{Lemma}
\newtheorem{remark}{Remark}
\newtheorem{corollary}{Corollary}

\usepackage{hyperref}



\usepackage[accepted]{icml2020}

\icmltitlerunning{Residual Bootstrap Exploration for Bandit Algorithms}

\begin{document}

\twocolumn[
\icmltitle{Residual Bootstrap Exploration for Bandit Algorithms}



\icmlsetsymbol{equal}{*}

\begin{icmlauthorlist}
\icmlauthor{Chi-Hua Wang}{equal,Purdue}
\icmlauthor{Yang Yu}{equal,Purdue}
\icmlauthor{Botao Hao}{Princiton}
\icmlauthor{Guang Cheng}{Purdue}
\end{icmlauthorlist}

\icmlaffiliation{Purdue}{Department of Statistics, Purdue University, West Lafayette, USA}
\icmlaffiliation{Princiton}{
Department of Electrical Engineering,
Princeton University, New Jersey, USA}

\icmlcorrespondingauthor{Guang Cheng}{chengg@purdue.edu}

\icmlkeywords{Machine Learning, ICML}

\vskip 0.3in
]



\printAffiliationsAndNotice{\icmlEqualContribution} 

\begin{abstract}In this paper, we propose a novel perturbation-based exploration method in bandit algorithms with bounded or unbounded rewards, called residual bootstrap exploration (\texttt{ReBoot}). The \texttt{ReBoot} enforces exploration by injecting data-driven randomness through a residual-based perturbation mechanism. This novel mechanism captures the underlying distributional properties of fitting errors, and more importantly boosts exploration to escape from suboptimal solutions (for small sample sizes) by inflating variance level in an \textit{unconventional} way. In theory, with appropriate variance inflation level, \texttt{ReBoot} provably secures instance-dependent logarithmic regret in Gaussian multi-armed bandits. We evaluate the \texttt{ReBoot} in different synthetic multi-armed bandits problems and observe that the \texttt{ReBoot} performs better for unbounded rewards and more robustly than \texttt{Giro}
\cite{kveton2018garbage} and \texttt{PHE} \cite{kveton2019perturbed}, with comparable computational efficiency to the Thompson sampling method.
\end{abstract}

\section{Introduction}\label{sec:intro}
A sequential decision problem is characterized by an agent who interacts with an uncertain environment,
and maximizes cumulative rewards \citep{sutton2018reinforcement}.  To learn to make optimal decisions as soon as possible, the agent must balance between
exploiting the current best decision to accumulate instant rewards and executing an exploratory decision to optimize future rewards. In literature \citep{lattimore2018bandit}, a gold standard is that we should explore more on where we are not sufficiently confident. Thus, a principled way to understand and ultilize uncertain quantification stands in the core of sequential decision makings.

The most popular approaches with theoretical guarantees are based on the optimism principle. The class of upper confidence bound (UCB) type algorithm \citep{auer2002finite, abbasi2011improved} tends to maintain a confidence set such that the agent acts in an optimistic environment. The class of Thompson sampling (TS) type algorithm \citep{russo2018tutorial} maintains a posterior distribution over model parameters and then acts optimistically with respect to samples from it. However, those types of algorithms are known to be hard to generalize to structured problems \citep{kveton2018garbage}. Thus, we may ask the following question: 
\begin{center}
\emph{Can we design a better principle that can provably quantify uncertainties and is easy to generalize? }    
\end{center}

We follow the line of bootstrap explorations \citep{osband2015bootstrapped, elmachtoub2017practical, kveton2018garbage}, which is known to be easily generalized to structured problems. In this work, we carefully design a type of ``perturbed'' residual bootstrap as a data-dependent exploration in bandits, which may also be viewed as ``follow the bootstrap leader'' algorithm in a general sense. The main principle is that residual bootstrap in the statistics literature \citep{mammen1993bootstrap} can be adapted to capture the underlying distributional properties of fitting errors. It turns out that the resulting level of exploration leads to the optimal regret. In this case, we call the employed perturbation as ``regret-optimal perturbation'' scheme. The regret-optimal perturbation is obtained through appropriate uncertainty boosting based on residual sum of squares (RSS). 

\textbf{Our contributions:}
\vspace{-1.5 mm}
\begin{itemize}
    \item We propose a novel residual bootstrap exploration algorithm that maintains the generalization property and works for both bounded and unbounded rewards;  
    
    \item We prove an optimal instance-dependent regret for an instance of \texttt{ReBoot} for unbounded rewards (Gaussian bandit). This is a non-trivial extension beyond Bernoulli rewards \citep{kveton2018garbage}.  We utilize sharp lower bounds for the normal distribution function and carefully design the variance inflation level;
    
    \item We empirically compare \texttt{ReBoot} with several provable competitive methods and demonstrate our superior performance in a variety of unbounded reward distributions while preserving computational efficiency.
\end{itemize}
\textbf{Related Works.} \texttt{Giro} \cite{kveton2018garbage} directly perturbed the historical rewards by nonparametric bootstrapping and adding deterministic pseudo rewards. One limitation is that \texttt{Giro} is only suitable for bounded rewards. The range of pseudo rewards usually depends on the extreme value of rewards which could be $\pm\infty$ for an unbounded distribution. In this case, there is no principle guidance in choosing appropriate values for pseudo rewards. Technically, the analysis in \citet{kveton2018garbage} heavily relies on beta-binomial transformation,
and thus is hard to extend to unbounded reward, e.g. Gaussian bandit. Another related work \citet{elmachtoub2017practical} utilized bootstrap to randomize decision-tree estimator in contextual bandits but did not provide any optimal regret guarantee.

\texttt{PHE} \cite{kveton2019perturbed} randomized the history by directly injecting Bernoulli noise and provided regret guarantee limited to bounded reward case. \citet{lu2017ensemble} proposed ensemble sampling by injecting Gaussian noise to approximate posterior distribution in Thompson sampling. However, their regret guarantee has an irreducible term linearly depending on time horizon even when the posterior can be exactly calculated. In practice, it may also be hard to decide what kind of noises and what amount of noises should be injected since this is not a purely data-dependent approach.

Another line of works is to use bootstrap to construct sharper confidence intervals in UCB-type algorithm. \citet{NIPS2019_9382} proposed a nonparametric and data-dependent UCB algorithm based on the multiplier bootstrap and derived a second-order correction term to boost the agent away from sub-optimal solutions. However, their approach is computational expensive since at each round, they need to resample a large number of times for the history. By contrast, \texttt{Reboot} only needs to resample once at each round.

\textbf{Notations.} Throughout the paper, we denote $[n]$ as the set $\{1,2,\cdots, n\}$. For a set $E$, we denote its complement as $E^{c}$. We denote $N(\mu, \sigma^2)$ as the Gaussian distribution with mean parameter $\mu$ and variance parameter $\sigma^2$. We write $X \sim ( \mu, \sigma^2)$ for a random variable $X$ if its distribution has mean $\mu$ and variance $\sigma^2$. We write $a\lesssim b$ if $a\leq Cb$ for some constant $C$.


\section{Residual Bootstrap Exploration (\texttt{ReBoot})}\label{sec_ReBoot_alg}

In this section, we first briefly discuss in Section \ref{sec:vanilla_RB} why vanilla residual bootstrap exploration may not work for multi-armed bandit problems. This motivates the \texttt{ReBoot} algorithm as a remedy to be presented in Section \ref{sec:alg_reboot} and the full description of each step will be given in Section \ref{subsec:ReBootExploration_MAB}.
Discussion and interpretation on the tuning parameter $\sigma_{a}$ is given in Section \ref{subsec:prevent_overunder_exploration}.

\textbf{Problem setup.}
We present our approach in the stochastic multi-armed bandit (MAB) problems.  
There are $K$ arms, and each arm $k=1\in[K]$ has a reward distribution $P_k$ with an unknown mean parameter $\mu_k$. Without loss of generality, we assume arm 1 is the optimal arm, that is, $\mu_1 = \max_{k \in [K]}\mu_{k}$.
Specifically, the agent interacts with an bandit environment for $T$ rounds. In round $t \in [T]$, the agent pulls an arm $I_{t} \in [K]$ and observes a reward $r_{t}$. 
The objective is to minimize the expected cumulative regret, defined as, 
\begin{equation}
    R(T) = 
    T\mu_{1}
    -E[\sum_{t=1}^{T}r_{t}]
    =
    \sum_{k=2}^{K}\Delta_{k}E[\sum_{t=1}^{T}I\{I_{t} = k\}],
\end{equation}
where $\Delta_{k}=\mu_1 - \mu_k$ is the sub-optimality gap for arm $k$, and $I\{\cdot\}$ is an indicator function. Here, the second equality is from the regret decomposition Lemma (Lemma 4.5 in \citet{lattimore2018bandit}). 

\subsection{The failure of vanilla residual bootstrap} 
\label{sec:vanilla_RB}

Bootstrapping a size $s$ reward sample set of arm $k$, $\{Y_{k,i}\}_{i=1}^{s}$, via residual bootstrap method \cite{mammen1993bootstrap}
 consists of the following four steps:
\vspace{-2mm}
\begin{enumerate}[noitemsep]
\item Compute an average reward $\bar{Y}_{k,s}=s^{-1}\sum_{i=1}^{s}Y_{k,i}$. \item Compute the residuals $\{e_{k,i}\}_{i=1}^{s}$ for each reward sample with $e_{k,i} = Y_{k,i} - \bar{Y}_{k,s}$.
\item Generate bootstrap weights 
(random variables with zero mean and unit variance) $\{w_i\}_{i=1}^{s}$. 
\item 
Add the reward average $\bar{Y}_{k,s}$
with the average of perturbed residuals  
$s^{-1}\sum_{i=1}^{s} w_i e_{k,i}$ to get perturbed reward average $\bar{Y}_{k,s}^{*}$.
\end{enumerate}

\vspace{-1.5mm}
An exclusive feature of $\bar{Y}_{k,s}^{*}$ is that it preserves the empirical variation among current data set. 
That is, 
the conditional variance of perturbed reward average $\bar{Y}_{k, s}^{*}$ on given reward sample set $\{Y_{k,i}\}_{i=1}^{s}$ is the uncertainty quantified by reward average $\bar{Y}_{k,s}$.
To see this, notice that, from the above residual bootstrap procedure, the perturbed reward average $\bar{Y}_{k,s}^{*}$ admits the presentation
\begin{equation}\label{eq:perterbed_mean}
    \bar{Y}_{k,s}^{*} =
    \bar{Y}_{k,s}
    + \frac{1}{s}\sum_{i=1}^{s}w_i\cdot e_{k,i}.
\end{equation}
Since the bootstrap weights are required to have zero mean and unit variance, the distribution of perturbed average $\bar{Y}_{s}^{*}$ conditioning on the current data set $\{Y_{i}\}_{i=1}^{s}$ has mean and variance as
\begin{equation}\label{eq:pertb_ave}
    \bar{Y}_{k,s}^{*}|\{Y_{k,i}\}_{i=1}^{s}\sim (\bar{Y}_{k,s}, s^{-2}\text{RSS}_{k,s}),
\end{equation}
which means that its expectation equals sample average $\bar{Y}_{k,s}$ and its variance can be represented as $s^{-2}\text{RSS}_{k,s}$, 
where 
\begin{equation}\label{eq:RSS}
\text{RSS}_{k, s}= \sum_{i=1}^{s}e_{k,i}^2
\end{equation} is the residual sum of squares.

\begin{remark}
Note that the RSS in \eqref{eq:RSS} is a standard measure of \textit{goodness of fit} in statistics, that is, how well a statistical model fit the current data set. After a glimpse, it seems that the \textit{variance} of bootstrap-based mean estimator $\bar{Y}_{k,s}^{*}$ should drop hints on the \textit{right} amount of randomness for exploration. Such intuition guides a different type of exploration in MAB problem, as elaborated in the following paragraph.
\end{remark}

\textbf{Vanilla residual bootstrap exploration.}

As well recognized in the literature of bandit algorithm \cite{lattimore2018bandit}, policy using reward average as arm index (Follow-the-Leader algorithm) can incur linear regret in multi-armed bandit problem (Figure.\ref{fig:linear_regret}; blue line). Alternatively, policy using perturbed reward average via residual bootstrap as in \eqref{eq:pertb_ave} induces an 
\textit{data-driven} exploration, at the level of 
statistical uncertainty of current reward sample set ($s^{-2}\text{RSS}_{k,s}$).

Exploration at the level of current reward sample set's statistical uncertainty is a mixture of hopes and concerns. On one hand, we hope the data-driven exploration level ($s^{-2}\text{RSS}_{k,s}$) hints a right amount of randomness for escaping from suboptimal solutions; on the other hand, we concern that large-deviated average of reward samples or poor fitting of adopted statistical model haunts the performance of bandit algorithms.

\begin{figure}[t!]
\begin{center}
\centerline{\includegraphics[width=\columnwidth]{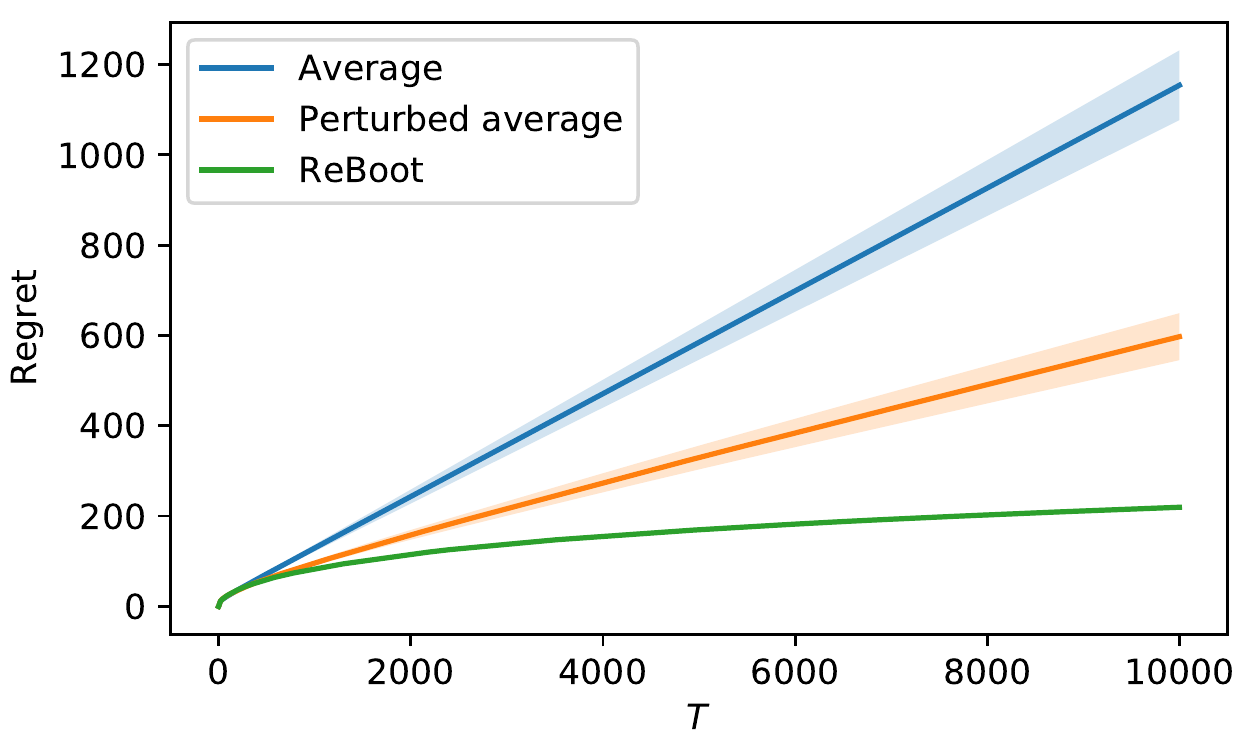}}
\vskip -0.2in
\caption{Gaussian 10-armed bandit using reward average (no exploration), perturbed reward average, and the proposed \texttt{ReBoot} algorithm. $\{\mu_k\}_{k=1}^{10}$ are randomly picked from $[-1,1]$. The error bars represent the standard error of the mean regret over $500$ runs. 
}
\label{fig:linear_regret}
\end{center}
\vskip -0.4in
\end{figure}

Unfortunately, we witness in empirical experiments  
that policy using perturbed reward average $\bar{Y}_{k,s}^{*}$ as arm index, in spite of improving the Follow-the-Leader algorithm,
still can incur linear regret (Figure.\ref{fig:linear_regret}; orange line). The problem is that
the exploration level of perturbed reward average inherited from current reward samples' statistical uncertainty is not sufficient, leading to under-exploration.

Surprisingly, after we carefully perturb the reward average in an \textit{unconventional} way, we observe that the bandit algorithm successfully secures sublinear regret in the experiment
(Figure.\ref{fig:linear_regret}; green line).
As a summary of our experience, 
we devise a ''regret-optimal`` perturbation scheme and propose it as the \texttt{ReBoot} algorithm.

\subsection{Algorithm \texttt{ReBoot}}\label{sec:alg_reboot}

\begin{algorithm}[t]
  \caption{Residual Bootstrap Exploration (\texttt{ReBoot})}
  \label{alg:residual_bootstrap}
  \begin{algorithmic}[1]
   \REQUIRE Exploration aid unit $\sigma_{a}$
   \FOR{$t = 1, \dots, K$}
   \STATE Pull arm $t$ and get reward $r_t$ \\
   \STATE $\mathcal{H}_{t} \gets (r_{t})$
   \ENDFOR
   \FOR{$t = K+1, K+2, \dots$}
   \FOR{$k = 1, \dots, K$}
   \STATE $s \gets |\mathcal{H}_{k}|$; \quad$\bar{Y}_{k} \gets \text{Average}(\mathcal{H}_{k})$
   \FOR{$i = 1, \dots, s$}
   \STATE $e_{ i} \gets Y_{k,i} - \bar{Y}_{k}$
   \ENDFOR
   \STATE   \label{alg:pseudo_residual}
  $e_{ s+1} \gets \sqrt{s+2}\,\sigma_{a}; \quad e_{ s+2} \gets - \sqrt{s+2}\,\sigma_{a}$
    \STATE Generate bootstrap weights $w_1, \ldots, w_{s+2}$
    \STATE 
    $\hat{\mu}_{k}^{*} \gets \bar{Y}_{k,s} + 
    \text{Average}((w_i e_i)_{i \in [s+2]})
    $
   \ENDFOR
   \STATE Pull arm $I_t \gets \arg\max_{k \in [K]} \hat{\mu}^{*}_k$, and get reward $r_t$
   \STATE $\mathcal{H}_{I_{t}}\gets 
   \mathcal{H}_{I_{t}} \oplus (r_{t})$
   \ENDFOR 
  \end{algorithmic}
\end{algorithm}

We propose the \texttt{ReBoot} algorithm which explores via residual bootstrap with a regret-optimal perturbation scheme. Specifically, at each round, for each arm $k\in[K]$ (denoting by $\mathcal{H}_k=(Y_{k,1},\dots,Y_{k,s})$ the \emph{history} of arm $k$: a vector of all $s$ rewards received so far by pulling arm $k$), \texttt{ReBoot} computes an index for arm $k$ via four steps:
\vspace{-3mm}
\begin{enumerate}[noitemsep]
    \item Compute the average reward $\bar{Y}_{k,s}=s^{-1}\sum_{i=1}^{s}Y_{k,i}$.
    \item Compute the  residuals $\{e_{k,i}\}_{i=1}^{s}$ with $e_{k,i} = Y_{k,i}-\bar{Y}_{k,s}$ for all $i\in[s]$. Appending $e_{k,s+1}=\sqrt{s+2}\,\sigma_{a}$ and $e_{ k, s+2}=-\sqrt{s+2}\,\sigma_{a}$ as two \emph{pseudo} residuals ($\sigma_{a}>0$ is a tuning parameter).\label{step:pseudo}
    \item Generate bootstrap weights (random variables with zero mean and unit variance) $\{w_i\}_{i=1}^{s+2}$.  \label{step:weight}
    \item 
    Add the reward average $\bar{Y}_{k,s}$
with the average of perturbed residuals  
$(s+2)^{-1}\sum_{i=1}^{s+2} w_i e_{k,i}$ to the get \textit{arm index} $\bar{\mu}_{k,s}^{*}$.
\label{step:boots}
\end{enumerate}
\vspace{-2 mm}
Then, \texttt{ReBoot} follows the bootstrap leader. That is, \texttt{ReBoot} pulls the arm with the highest arm index; formally,
\begin{equation}
    I_{t} = \arg\max_{k \in [K]}\hat{\mu}_{k}^{*}.
\end{equation}
A summarized \texttt{ReBoot} algorithm for the MAB problem is presented in Algorithm \ref{alg:residual_bootstrap}.
We explain the residual bootstrap (Step \ref{step:boots}) and discuss the choice of bootstrap weights (Step \ref{step:weight}) in Section \ref{subsec:ReBootExploration_MAB}, and we explain the reason for adding pseudo residuals in Section \ref{subsec:prevent_overunder_exploration}. 

\subsection{Residual bootstrap exploration with regret optimal perturbation scheme}\label{subsec:ReBootExploration_MAB}

In this subsection, we illustrate how to implement residual bootstrap exploration with a proposed regret-optimal perturbation scheme in MAB problem. We give full descriptions on the four steps of the \texttt{ReBoot} with discussion on how the proposed regret-optimal perturbation scheme boosts the uncertainty to escape from suboptimal solutions. 
Before proceeding to the exact description of our proposed policy, it's helpful to introduce further notations for MAB. At round $t$, the \textit{number of pulls} of arm $k$ is denoted by $T_{k,t}\equiv \sum_{j=1}^{t}I(I_{j}=k)$. 

\textbf{Step 1. Compute the average reward of history}

We first describe a \textit{historical reward} vector $\mathcal{H}_{k,s}$ for arm $k$ when $T_{k,t}=s$ at round $t$. Denote the $l$-th entry of $\mathcal{H}_{k,s}$ by $Y_{k,l}$ the reward of arm $k$ received after the $l$-th pull. 

Associated with a historical reward vector $\mathcal{H}_{k,s}$ is the average reward 
$\bar{Y}_{k,s}=s^{-1} \sum_{l=1}^{s}Y_{k,l}$.

\textbf{Step 2. Compute residuals and pseudo residuals}

Given the average reward $\bar{Y}_{k,s}$ of the 
historical reward vector $\mathcal{H}_{k,s}$, compute the residual set $\{e_{k,i} = Y_{k,i}-\bar{Y}_{k,s}\}_{i=1}^{s}$. Note that the residual set carries the statistical uncertainty among the history $\mathcal{H}_{k,s}$, 
contributing to part of exploration level
used in the \texttt{ReBoot}. As we see in Section \ref{sec:vanilla_RB}, such exploration level is not enough to secure sublinear regret.

Encouraging exploration to escape from suboptimal solutions, especially when number of rewards is little, requires a sophisticated design of perturbation scheme. Our proposal on devising a variance-inflated average of perturbed rewards applies the following three steps. First, specify an \textit{exploration aid unit} $\sigma_{a}$, a tuning parameter of \texttt{ReBoot}.
Second, generate  \textit{pseudo} residuals by the scheme
\begin{eqnarray}
e_{k,s+1} &=& \sqrt{s+2}\cdot \sigma_{a}, \label{eq:pseu_plus}\\ 
e_{k,s+2} &=&\sqrt{s+2}\cdot (-\sigma_{a})\label{eq:pseu_minus}.
\end{eqnarray}
Last, append pseudo residuals $\{e_{k,s+1}, e_{k,s+2}\}$ to  $\{e_{k,i}\}_{i=1}^{s}$
to form an augmented residual set $\{e_{k,i}\}_{i=1}^{s+2}$.

\textbf{Step 3. Generate bootstrap weights}

We generate bootstrap weights $w_1,\dots,w_{s+2}$ by drawing i.i.d.\ random variables from a mean zero and unit variance distribution. As recommended in the literature of residual bootstrap \cite{mammen1993bootstrap}, choices of bootstrap weights include 
Gaussian weights, Rademacher weights and skew correcting weights.

\textbf{Step 4. Perturb average reward with residuals}

The arm index $\hat{\mu}_{k,s}^{*}$ using in the \texttt{ReBoot} is then computed by summing up the average reward  $\bar{Y}_{k,s}$
and the average of perturbed augmented residual set $\{w_i \cdot e_{i,s}\}_{i=1}^{s+2}$; formally, 
\begin{equation}\label{eq:reboot_index}
    \hat{\mu}_{k,t}^{*}
    =
    \bar{Y}_{k,s}
    +
    \frac{1}{s+2}\sum_{i=1}^{s+2}w_{i}\cdot e_{k,i}.
\end{equation}

\begin{remark}
Compared to the perturbed average reward  \eqref{eq:perterbed_mean} in vanilla residual bootstrap exploration, the arm index \eqref{eq:reboot_index} used in the \texttt{ReBoot} possesses additional exploration level controlled by the tuning parameter $\sigma_{a}$ in equations \eqref{eq:pseu_plus} and \eqref{eq:pseu_minus}. Intuitively, larger $\sigma_{a}$ delivers stronger exploration assistance for arm index \eqref{eq:reboot_index}, increasing the chance of escaping from suboptimal solutions. 
\end{remark}

\textbf{How the \texttt{ReBoot} explores?}

Here we explain, conditioning on historical reward vector $\mathcal{H}_{k,s}$, how the arm index used in the \texttt{ReBoot} explores. By our perturbation scheme, at round $t$, the arm index $\hat{\mu}_{k,t}^{*}$ admits a presentation
\begin{equation}\label{eq:resample_mean_AEA}
    \hat{\mu}_{k,t}^{*} = \bar{Y}_{k,s} + 
    \frac{1}{s+2}[
    \sum_{i=1}^{s}w_i \cdot e_{k,i}
    +
    \sum_{i=s+1}^{s+2}
    w_i \cdot e_{k,i}
    ],
\end{equation}
where $e_{k,s+1}, e_{k,s+2}$ are pseudo residuals specified from the proposed perturbation scheme(equations \eqref{eq:pseu_plus} and \eqref{eq:pseu_minus}). 
The data-driven exploration, contributed by perturbed residuals $\{w_{i}\cdot e_{k,i}\}_{i=1}^{s}$, reflects the current reward samples' statistical uncertainty; the additional exploration aid, contributed by perturbed pseudo residuals $\{w_{s+1}\cdot e_{k,s+1},w_{s+2}\cdot e_{k,s+2} \}$, echos the expected statistical uncertainty in the scale of specified exploration aid unit $\sigma_{a}$. 
The art is to tune the parameter $\sigma_{a}$ to strike a balance between these two source of exploration, and to avoid underexploration and secure linear regret. (See more discussion on the intuition and interpretation on $\sigma_{a}$ in Section \ref{subsec:prevent_overunder_exploration}).

To sum up, given a historical reward vector $\mathcal{H}_{k,s}$, the conditional variance of arm index \eqref{eq:reboot_index} has a formula
\begin{equation}\label{eq:cond_var_formula}
\text{Var}(\hat{\mu}_{k,t}^{*}|\mathcal{H}_{k,s}) = \frac{1}{(s+2)^2}[\text{RSS}_{k,s} + \text{PRSS}_{s, \sigma_{a}}],
\end{equation}
which consists of the residual sum of squares, $\text{RSS}_{s}$ (see \eqref{eq:RSS}), from the perturbed residuals $\{w_i\cdot e_{k,i}\}_{i=1}^{s}$,
and the \textit{pseudo residual sum of square},
\begin{equation}\label{eq:prss}
    \text{PRSS}_{s, \sigma_{a}} 
    = 2\cdot (s+2)\cdot \sigma_{a}^2,
\end{equation}
 from the perturbed pseudo residuals $\{w_{s+1}\cdot e_{k,s+1}, w_{s+2}\cdot e_{k,s+2}\}$ at the level of tuning parameter $\sigma_{a}$. Later, in formal regret analysis in Section \ref{sec:regret_analysis}, we will see how  $\text{PRSS}_{s, \sigma_{a}}$ can assist the arm index $\hat{\mu}_{k,t}^{*}$ to prevent underexploration, leading to a pathway to a successful escape from suboptimal solutions.

\subsection{Use exploration aid unit to manage residual bootstrap exploration }\label{subsec:prevent_overunder_exploration}

In this subsection, we discuss the intuition and interpretation of the tuning parameter $\sigma_{a}$ in \texttt{ReBoot}.

\textbf{Choice of exploration aid unit $\sigma_{a}$.}

Now, we answer the question of what level of exploration aid is appropriate for the sake of MAB exploration. The art is to choose a level that can prevent the index $\hat{\mu}_{k,s}^{*}$ from underexploration. We first mentally trace such heuristics, and then provide an exact implementation scheme.

Intuitively, we want to choose a exploration aid unit $\sigma_{a}$ such that the $\text{PRSS}_{s,\sigma_{a}}$ in \eqref{eq:cond_var_formula} plays a major role when number of reward samples $s$ is small and then gradually loses its importance. Such consideration is an effort to preserve statistical efficiency of averaging procedure in the proposed regret-optimal perturbation scheme.

\textbf{Manage exploration level via exploration aid unit $\sigma_{a}$.}
Now we showcase our craftsmanship. Set the exploration unit $\sigma_{a}$ at the scale that inflates the reward distribution standard deviation $\sigma$ by a \textit{inflation ratio} $r$ such that \begin{equation}\label{eq:var_infla}
\sigma_{a} = r\cdot \sigma.
\end{equation}
An immediate consequence of scheme \eqref{eq:var_infla} with positive inflation ratio $r$ is a considerable uncertainty boosting on the arm index \eqref{eq:reboot_index}, especially at the beginning of bandit algorithm (the regime of little number of reward samples).

Note that in scheme \eqref{eq:var_infla}, the pseudo residual sum of squares \eqref{eq:prss} admits a formula $\text{PRSS}_{s,\sigma_{a}}
=2\cdot r^2 \cdot (s+2)\cdot \sigma^2$. That is, for a size $s$ reward sample set, pseudo residuals $e_{k,s+1}$ and $e_{k,s+2}$ collectively inflate the arm index variance to $2r^2(s+2)$ times reward distribution variance $\sigma^2$.

Now we illustrate how the exploration aid unit $\sigma_{a}$ manages the exploration of arm index $\hat{\mu}_{k,t}^{*}$ given a historical reward vector $\mathcal{H}_{k,s}$. First, we note that, with a sufficiently large inflation ratio $r$, the data-driven variation $\text{RSS}_{s}$ is \textit{dominated} by the pseudo residual sum of square $\text{PRSS}_{s, \sigma_{a}}$; formally, for any number of reward samples $s$, the good event 
\begin{equation}\label{eq:good_event}
    G_{k,s} = \{\text{RSS}_{k,s} \le \text{PRSS}_{s, \sigma_{a}}\}.
\end{equation} is of high probability. Second, given the good event $G_{k,s}$, formula \eqref{eq:cond_var_formula} implies that the variance of the arm index $\hat{\mu}_{k,t}^{*}$ given a historical reward vector $\mathcal{H}_{k,s}$ stays within a certain \textit{pre-specified} range proportional to $\text{PRSS}_{s,\sigma_{a}}$; that is, 
\begin{equation}
    \frac{\text{PRSS}_{s,\sigma_{a}}}{(s+2)^2}
    \le
    \text{Var}(\hat{\mu}_{k,t}^{*}|\mathcal{H}_{k,s})
    \le 
    2\cdot\frac{\text{PRSS}_{s,\sigma_{a}}}{(s+2)^2}.
\end{equation}
Last, involving the variance inflation scheme \eqref{eq:var_infla} with inflation ratio $r$, $\text{Var}(\hat{\mu}_{k,t}^{*}|\mathcal{H}_{k,s})$ is enclosed in a range proportional to reward distribution variance $\sigma^2$; i.e.
\begin{equation} \label{eq:variance_enclosure}
    \frac{2r^2}{s+2}
    \cdot 
    \sigma^2
    \le
    \text{Var}(\hat{\mu}_{k,t}^{*}|\mathcal{H}_{k,s})
    \le 
    \frac{4r^2}{s+2}
    \cdot \sigma^2.
\end{equation}
The key consequence of two-sided variance bound \eqref{eq:variance_enclosure} on $\hat{\mu}_{k,t}^{*}|\mathcal{H}_{k,s}$ is that, on good event $G_{k,s}$, the index of arm $k$ given $s$ reward samples does not suffer either severe overexploration or underexploration. Then, we are able to avoid severe underestimation for the optimal arm and severe overestimation for the suboptimal arms.

\textbf{Practical choice of inflation ratio $r$.}

For practice, we recommend to choose the inflation ratio $r = 1.5$ in MAB with unbounded reward. This choice is supported theoretically by formal analysis of Gaussian bandit presented in Regret Analysis section (Section \ref{sec:regret_analysis})
and 
empirically by experiments including Gaussian, exponential and logistic bandits in Figure \ref{fig:exp2} in Section \ref{sec:experiments}.

\begin{remark}
All treatments above did not impose distributional assumptions (i.e., the shape of distribution) on the arm reward (can be bounded or unbounded) and bootstrap weights (only assume zero mean and unit variance). In Section \ref{sec:comp}, we discuss the benefits of Gaussian bootstrap weight in \texttt{ReBoot}, leading to an efficient implementation  
with storage and computation cost as low as Thompson sampling.
\end{remark}

\subsection{Efficient implementation using Gaussian weight} \label{sec:comp}

A significant advantage of choosing \textit{Gaussian} bootstrap weight is low storage and
computational cost. This is due to the resulting conditional normality of arm index \eqref{eq:reboot_index}, \textit{no matter} the underlying reward distribution. The arm index of the \texttt{ReBoot} under Gaussian bootstrap weight condition on the historical reward is Gaussian distributed with sample average $\bar{Y}_{k,s}$ as its mean parameter and $s^{-2}\text{RSS}_{k,s}$
as its variance parameter. That is, 
\begin{equation}
    \hat{\mu}_{k,t}^{*}|\mathcal{H}_{k,s}
    \sim 
    N(\bar{Y}_{k,s}, s^{-2}\text{RSS}_{k,s}).
\end{equation}

Then, it can be implemented efficiently by the following incremental updates. At round $s$, after pulling arm $k$, we update $\bar{Y}_{k,s}$ $S_{a,s}\equiv\sum_{i=1}^s Y_{k,i}^2$, and $\text{RSS}_{k,s}$ by
\begin{equation}
    \bar{Y}_{k,s}=[(s-1)\bar{Y}_{k,s-1}+Y_{k,s}]/s ,\quad S_{k,s}=S_{k,s-1}+Y_{k,s}^2,
\end{equation}
and thus, $\text{RSS}_{k,s}=S_{k,s}-s\bar{Y}_{k,s}^2$. $\hat{\mu}_{k}^{*}$ in \eqref{eq:resample_mean_AEA} can be computed by a similar efficient approach. This implementation yielded by Gaussian bootstrap weight saves both storage and computational cost and makes \texttt{ReBoot} as efficient as \texttt{TS} \cite{agrawal2013further}. We compare \texttt{ReBoot} with \texttt{TS}, \texttt{Giro} \cite{kveton2018garbage}, and \texttt{PHE} \cite{kveton2019perturbed} on storage and computational cost in Table \ref{tab:storage}. An empirical comparison on computational cost is done in Section \ref{sec:exp3}.

\begin{table}[t!]
\caption{Storage and computational cost of \texttt{ReBoot}, \texttt{TS}, \texttt{Giro}, and \texttt{PHE}.}
\label{tab:storage}
\vskip 0.15in
\begin{center}
\begin{small}
\begin{tabular}{ccc}
\toprule
Algorithm    & Storage & Computational cost \\
\midrule
\texttt{TS}      & $O(K)$ & $O(KT)$ \\
\texttt{Giro}      & $O(aT)$ & $O(aT^2)$ \\
\texttt{PHE}      & $O(K)$ & $O(KT)$ \\
\texttt{ReBoot} & $O(K)$ & $O(KT)$ \\
\bottomrule
\end{tabular}
\end{small}
\end{center}
\vskip -0.1in
\end{table}

\section{Regret Analysis} \label{sec:regret_analysis}

\subsection{Gaussian \texttt{ReBoot}}

We analyze \texttt{ReBoot} in a K-armed Gaussian bandit. The setting and regret are defined in Section \ref{sec_ReBoot_alg}. We further assume the reward distribution of arm $k$ is Gaussian distributed with mean $\mu_k$ and variance $\sigma^2 = 1$.

\begin{theorem}\label{thm:GaussianMAB_main}
Consider a K-armed Gaussian bandit where the reward distribution of arm k is drawn from Gaussian distribution $N(\mu_k, 1)$.
Let $\sigma_{a}>1.5$ be the exploration aid unit.
Then, the T round regret of \texttt{ReBoot} satisfies: 
\begin{equation}
R(T)
\le 
\sum_{k=2}^{K}\Delta_{k}
[6 
+ 
\{
C_1(\sigma_{a}) + C_2(\sigma_{a}) 
\Delta_{k}^{-2}
\}
\cdot 
\log T]
,
\end{equation}
where the constants $C_1(\sigma_{a})$ and $C_2(\sigma_{a})$ are defined as 
\begin{eqnarray}
C_1(\sigma_{a}) &=& 8(2\sigma_{a}^2-1)^{-1},\label{constant_C1}\\
C_2(\sigma_{a}) &=& 128\sigma_{a}^2
\big(
3.1 + 2(1-2.25 \sigma_{a}^{-2})^{-\frac{1}{2}}
\big)\label{constant_C2}.
\end{eqnarray}
\end{theorem}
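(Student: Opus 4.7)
The plan is to follow the standard Agrawal--Goyal-style regret decomposition for Thompson-sampling-like policies, adapted to handle that under \texttt{ReBoot} the conditional variance of the arm index is itself a random quantity. Starting from $R(T)=\sum_{k=2}^{K}\Delta_{k}E[T_{k,T}]$, for each suboptimal arm $k$ I would fix a threshold $y_{k}\in(\mu_{k},\mu_{1})$ such as $y_{k}=\mu_{k}+\Delta_{k}/2$, and use
\[
P(I_{t}=k)\le P(\hat{\mu}_{k,t}^{*}>y_{k},\,I_{t}=k)+P(\hat{\mu}_{1,t}^{*}\le y_{k},\,I_{t}=k),
\]
which holds because $I_{t}=k$ together with $\hat{\mu}_{k,t}^{*}\le y_{k}$ forces $\hat{\mu}_{1,t}^{*}\le y_{k}$. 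The first sum will produce the $C_{2}(\sigma_{a})\Delta_{k}^{-2}\log T$ over-exploration contribution, the second will produce the $C_{1}(\sigma_{a})\log T$ under-exploration contribution, and the $6$ absorbs initialization and low-probability remainders.

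The next step is to exploit the conditional Gaussianity delivered by Gaussian bootstrap weights (Section \ref{sec:comp}): $\hat{\mu}_{k,t}^{*}\mid\mathcal{H}_{k,s}\sim N(\bar{Y}_{k,s},V_{k,s})$ with $V_{k,s}=(s+2)^{-2}[\text{RSS}_{k,s}+\text{PRSS}_{s,\sigma_{a}}]$. All tail computations will be restricted to the good event $G_{k,s}=\{\text{RSS}_{k,s}\le\text{PRSS}_{s,\sigma_{a}}\}$, on which the enclosure \eqref{eq:variance_enclosure} pins $V_{k,s}$ between $2\sigma_{a}^{2}/(s+2)$ and $4\sigma_{a}^{2}/(s+2)$. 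Under Gaussian rewards $\text{RSS}_{k,s}$ is $\chi^{2}_{s-1}$-distributed with $\sigma^{2}=1$, and the assumption $\sigma_{a}>1.5$ makes $2\sigma_{a}^{2}>4.5$ comfortably larger than the unit reward variance, so a $\chi^{2}$-tail estimate yields $\sum_{s}P(G_{k,s}^{c})=O(1)$; a standard sub-Gaussian bound will absorb $\{|\bar{Y}_{k,s}-\mu_{k}|>\Delta_{k}/3\}$ analogously.

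For the over-exploration term I would introduce a pull threshold $L_{k}\asymp\sigma_{a}^{2}\Delta_{k}^{-2}\log T$. On $G_{k,s}\cap\{\bar{Y}_{k,s}\le\mu_{k}+\Delta_{k}/3\}$ with $s\ge L_{k}$, an upper Mills-ratio bound on the Gaussian tail of $\hat{\mu}_{k,t}^{*}\mid\mathcal{H}_{k,s}$ at the scale $x=(y_{k}-\bar{Y}_{k,s})\sqrt{s+2}/\sqrt{4\sigma_{a}^{2}}\gtrsim\Delta_{k}\sqrt{s}/\sigma_{a}$ gives $P(\hat{\mu}_{k,t}^{*}>y_{k}\mid\mathcal{H}_{k,s})\le\exp(-c\Delta_{k}^{2}s/\sigma_{a}^{2})$, and the geometric sum over $s\ge L_{k}$ contributes only a constant beyond the $L_{k}$ term. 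Tracking the sharpest form of the Mills bound together with the two-sided variance enclosure is what produces the $\sqrt{1-2.25/\sigma_{a}^{2}}$ factor inside $C_{2}(\sigma_{a})$ and is precisely where the threshold $\sigma_{a}>1.5$ enters.

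The main obstacle is the under-exploration term $\sum_{t}P(\hat{\mu}_{1,t}^{*}\le y_{k},I_{t}=k)$. I would invoke the Agrawal--Goyal conditioning trick: writing $p_{1,s}=P(\hat{\mu}_{1,t}^{*}>y_{k}\mid\mathcal{H}_{1,s})$,
\[
\sum_{t}P(\hat{\mu}_{1,t}^{*}\le y_{k},\,I_{t}=k)\le\sum_{s\ge 0}E\!\left[\frac{1-p_{1,s}}{p_{1,s}}\right].
\]
On $G_{1,s}$, $\hat{\mu}_{1,t}^{*}\mid\mathcal{H}_{1,s}$ is Gaussian with variance at least $2\sigma_{a}^{2}/(s+2)$, so a \emph{sharp} lower bound on the normal tail (for example the Mills-ratio form $1-\Phi(x)\ge\phi(x)\,x/(x^{2}+1)$) controls $1/p_{1,s}$ by a polynomial in $s$ on the typical event $\bar{Y}_{1,s}\ge\mu_{1}-O(\sigma_{a}/\sqrt{s})$, while the complementary sub-Gaussian deviation of $\bar{Y}_{1,s}$ contributes only a geometrically summable tail. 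The denominator $2\sigma_{a}^{2}-1$ in $C_{1}(\sigma_{a})$ emerges because the inflated pseudo-residual variance $\approx 2\sigma_{a}^{2}/s$ must dominate the $1/s$ fluctuation rate of $\bar{Y}_{1,s}$ for $\sum_{s}E[1/p_{1,s}]$ to stay at $O(\log T)$; reconciling this competing requirement with the Mills-ratio constraint from the over-exploration side is the delicate balance the proof must strike, and it is exactly what the assumption $\sigma_{a}>1.5$ underwrites.
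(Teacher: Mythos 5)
Your high-level plan coincides with the paper's: the bound $R(T)\le\sum_{k=2}^{K}\Delta_k(a_k+b_k)$ imported from Theorem~\ref{GiroThm1} is exactly the Agrawal--Goyal-style split you describe, with the same threshold $\tau_k=\mu_k+\Delta_k/2$, the same good events $G_{k,s}=\{\text{RSS}_{k,s}\le\text{PRSS}_{s,\sigma_a}\}$ handled by chi-square concentration, and the same use of conditional Gaussianity of the index. Your treatment of the over-estimation term $b_k$ (deterministic smallness of $Q_{k,s}(\tau_k)$ on the good events once $s\gtrsim\sigma_a^2\Delta_k^{-2}\log T$, trivial bound before that) matches Lemmas~\ref{lm:b_ks1}--\ref{lm:b_ks3}.

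The genuine gap is in the under-estimation term $a_k=\sum_{s}E[\min\{1/p_{1,s}-1,T\}]$. You propose to control $1/p_{1,s}$ on a ``typical'' event $\{\bar{Y}_{1,s}\ge\mu_1-O(\sigma_a/\sqrt{s})\}$ and to dismiss the complement as geometrically summable. But that cutoff sits exactly at the scale of the standard deviation of $\bar{Y}_{1,s}$, so the complement has probability $\Phi(-c\sigma_a)$ --- a constant independent of $s$ and of $T$ --- and on it $\min\{1/p_{1,s}-1,T\}$ can be as large as $T$; this split therefore yields $a_k=\Omega(T)$, i.e.\ linear regret. The step your sketch is missing is the paper's Lemma~\ref{lm:a_ks}: a bound on $E[1/p_{1,s}]$ that is uniform in $s$ and holds without conditioning on any typical event, obtained by (i) the reciprocal Gaussian-tail bound $P(Z\ge t)^{-1}\le e^{3t^2/2}$ applied at $t=(3/2)^{1/2}\sigma_a^{-1}h(\bar{Y}_{1,s})$ with $h(\bar{Y}_{1,s})\sim N(0,1)$, and (ii) integration of the resulting exponential blow-up against the exact Gaussian law of $h(\bar{Y}_{1,s})$ via the moment generating function of $\chi^2_1$. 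The finiteness of that MGF is a genuine competition between two exponential rates; it is precisely where the hypothesis $\sigma_a>1.5$ enters and where the factor $(1-2.25\sigma_a^{-2})^{-1/2}$ in $C_2(\sigma_a)$ originates --- from the under-estimation side, not from the Mills-ratio analysis of the suboptimal arm as your sketch asserts. Without this uniform moment bound (or an equivalent device), your argument does not close.
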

\begin{proof}
We defer the proof and rigorous non-asymptotic analysis to Appendix \ref{Appe:main_regret}. The key steps and asymptotic reasoning are presented in subsection \ref{sec:proof_scheme_main_thm}.
\end{proof}
After further optimizing the constants and assuming, without loss of generality, the maximum suboptimality gap $\max_{k \in [K]}\Delta_{k} < 1$, we have the following corollary.
\begin{corollary}\label{Cor:reboot_opt}
Choose $\sigma_{a}=1.7$ in Theorem \ref{thm:GaussianMAB_main}. Then, the T round regret of \texttt{ReBoot} satisfies:
\begin{equation}
R(T)\lesssim 
\sum_{k=2}^{K}\Delta_k 
+
\sum_{k=2}^{K}
\frac{ \log T}{\Delta_k}.
\end{equation}
\end{corollary}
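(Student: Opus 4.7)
The plan is to obtain Corollary~\ref{Cor:reboot_opt} as a direct specialization of Theorem~\ref{thm:GaussianMAB_main}: substitute $\sigma_a = 1.7$, verify that the resulting constants $C_1(1.7)$ and $C_2(1.7)$ are finite and positive, and then use the hypothesis $\max_{k \in [K]} \Delta_k < 1$ to merge the instance-independent logarithmic term into the standard $\log T / \Delta_k$ form.

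First I would check applicability of Theorem~\ref{thm:GaussianMAB_main}. The theorem requires $\sigma_a > 1.5$, and $1.7 > 1.5$ so the bound applies. The denominator in $C_1(\sigma_a)$ becomes $2\sigma_a^2 - 1 = 2(1.7)^2 - 1 = 4.78 > 0$, and the quantity inside the square root in $C_2(\sigma_a)$ is $1 - 2.25\sigma_a^{-2} = 1 - 2.25/2.89 \approx 0.222 > 0$, so both $C_1(1.7)$ and $C_2(1.7)$ are well-defined, positive, absolute constants (numerically, $C_1(1.7) \approx 1.67$ and $C_2(1.7)$ is some $O(1)$ number in the few thousands). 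Since these are universal constants independent of $K$, $T$, and the gaps, they can be absorbed into the $\lesssim$ notation.

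Next I would simplify the three contributions appearing in the bracketed expression of Theorem~\ref{thm:GaussianMAB_main} on a per-arm basis:
\begin{equation*}
\Delta_k \bigl[\, 6 + C_1(\sigma_a)\log T + C_2(\sigma_a)\Delta_k^{-2}\log T \,\bigr].
\end{equation*}
The first term $6\Delta_k$ contributes $\lesssim \Delta_k$ directly. The third term $C_2(\sigma_a)\,\Delta_k^{-1}\log T$ is already of the form $\lesssim \log T/\Delta_k$. For the middle term $C_1(\sigma_a)\Delta_k \log T$, I invoke the hypothesis $\Delta_k \le \max_{k} \Delta_k < 1$, which gives $\Delta_k < 1/\Delta_k$ and therefore $\Delta_k \log T \le (1/\Delta_k)\log T$; hence this term is also $\lesssim \log T/\Delta_k$.

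Summing these bounds over $k = 2, \dots, K$ yields
\begin{equation*}
R(T) \;\lesssim\; \sum_{k=2}^{K} \Delta_k + \sum_{k=2}^{K} \frac{\log T}{\Delta_k},
\end{equation*}
as claimed. There is no genuine obstacle here since Theorem~\ref{thm:GaussianMAB_main} does the heavy lifting; the only thing to watch is that the assumption $\max_k \Delta_k < 1$ is used essentially to absorb the $C_1(\sigma_a)\Delta_k \log T$ term, and that without it one would retain a separate $\sum_k \Delta_k \log T$ contribution in the bound.
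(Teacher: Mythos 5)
Your proposal is correct and follows essentially the same route as the paper, which simply specializes Theorem \ref{thm:GaussianMAB_main} to $\sigma_a = 1.7$, notes that $C_1(1.7)$ and $C_2(1.7)$ are finite absolute constants, and uses the standing assumption $\max_{k}\Delta_k < 1$ to absorb the $C_1(\sigma_a)\Delta_k\log T$ term into $\log T/\Delta_k$. You also correctly isolate the one place the gap assumption is genuinely needed, which is the only substantive point in this derivation.
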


\begin{remark}
Corollary \ref{Cor:reboot_opt} demonstrates that the regret bound of the proposed \texttt{ReBoot} algorithm matches the state-of-art theoretical result for MAB based on UCB algorithm (Theorem 7.1 in \citet{lattimore2018bandit}).
\end{remark}

\textbf{Compare to \texttt{Giro}.}
The way of adding pseudo observations to analyze regret of Bernoulli bandit in \citet{kveton2018garbage} heavily relies on the \textit{bounded support} assumption on reward distribution.
Our theoretical contribution is to carry the regret analysis beyond bounded support reward distribution  to unbounded reward distribution regime, by introducing \textit{residual perturbation}-based exploration in MAB problem.

\textbf{Technical novelty compared to \texttt{Giro}.} 
The argument in \texttt{Giro} for proving regret upper bound does not directly apply to Gaussian bandit because they rely on the fact that the sample variance of Bernoulli reward is bounded. Indeed, in Gaussian bandit, the sample variance has chi-square distribution, which is not bounded. We overcome such predicament after recognizing a consequential good event of proposed perturbation scheme. Certainly, our novel regret-optimal perturbation scheme cages the exploration level of arm index into a two-sided bound with high probability in Gaussian bandit. Such bound is controllable by the tuning parameter $\sigma_{a}$ of \texttt{ReBoot} and capable of preventing underexploration phenomenon of vanilla residual bootstrap exploration.

\subsection{Discussion on choosing exploration aid unit $\sigma_{a}$}

The condition $\sigma_{a}>1.5$ is to ensure the constant $C_{2}(\sigma_{a})$ in \eqref{constant_C2} is finite. Constant $C_{2}(\sigma_{a})$ comes from analysis of $a_k$, i.e., the expected number of sub-optimal pulls due to underestimation on the optimal arm. Large $\sigma_{a}$ helps with jumping off the bad instance where the reward samples of the optimal arm is far below its expectation. Constant $C_2(\sigma_{a})$ in \eqref{constant_C2} is  decreasing in $\sigma_{a}$ for $\sigma_{a} > 1.7$. Constant $C_1(\sigma_{a})$ in \eqref{constant_C1} is decreasing in $\sigma_{a}$ for $\sigma_{a} > 0.5$. Therefore, in Corollary \ref{Cor:reboot_opt}, we pick $\sigma_{a} = 1.7$ to optimize the constant. Since \texttt{ReBoot} performs well empirically, as we show in Section \ref{sec:experiments}, the theoretically suggested value of exploration aid unit $\sigma_{a}$ is likely to be loose.

\subsection{Proof Scheme}\label{sec:proof_scheme_main_thm}

We roadmap the proof scheme of Theorem \ref{thm:GaussianMAB_main}. The key is to analyze the situation that leads to pulling a sub-optimal arm. Such situation consists two type of events: underestimating the optimal arm and overestimating a suboptimal arm.

As shown in the Theorem 1 of \cite{kveton2018garbage}, the $T$ round regret of perturbed history type algorithm has an upper bound
\begin{equation}\label{eq:regret_upp_bound}
    R(T) \le \sum_{k=2}^{K}\Delta_{k}(a_{k}+b_{k}).
\end{equation}
The first term $a_{k}$ is the expected number of rounds that the optimal arm 1 has been being underestimated; formally, 
\begin{equation}
    a_{k} = \sum_{s=0}^{T-1}
    E[\min\{N_{1,s}(\tau_k), T\}],
\end{equation}
where $N_{1,s}(\tau_k)$ is the expected number of rounds that the optimal arm 1 being underestimated given $s$ sample rewards. The second term $b_{k}$ is the probability that the suboptimal arm $k$ is being overestimated; formally, 
\begin{equation}
    b_{k} = 1+ \sum_{s=0}^{T-1}P(Q_{k,s}(\tau_k)>T^{-1}),
\end{equation}
where $Q_{k,s}(\tau_k)$ is the probability of the suboptimal arm k is being overestimated given $s$ sample rewards.

Here we explain the situation that the bandit algorithm \textit{will not} pull the suboptimal arm $k$ at round $t$. Consider the optimal arm 1 and a suboptimal arm $k$. At round $t$, suppose $T_{1,t} = s_1$ and $T_{k,t} = s_k$, then the indexes of arm 1 and arm $k$ are $\hat{\mu}_{1,s_1}^{*}$ and $\hat{\mu}_{k,s_k}^{*}$.
Given a constant level $\tau_k \in \mathbb{R}$, we define the event of \textit{underestimated the optimal arm 1} as 
\begin{equation}
    F_{s_1} = \{\hat{\mu}_{1,s_1}^{*} \le \tau_k\}
\end{equation}
and the event of \textit{overestimated a suboptimal arm k} as
\begin{equation}
    E_{s_k}^{c} = \{\hat{\mu}_{k,s_k}^{*} > \tau_k\}. 
\end{equation}
If we pick $\tau_k \in (\mu_k, \mu_1)$ and the distribution of indexes both have exponential decaying tails, theory of large deviation indicates that the events of $F_{s_1}$ and $E_{s_k}^{c}$ both are rare events asymptotically. Given both events $F_{s_1}$ and $E_{s_k}^{c}$ happens, the agent \textit{will not} pull the suboptimal arm k.

\textbf{Roadmap of bounding $a_k$}

We provide asymptotic reasoning on bounding $a_k$ and defer the non-asymptotic analysis to lemma \ref{lm:a_ks}. 
Recall that for a given constant level $\tau_k$, the probability of the optimal arm 1 being underestimated given $s$ reward samples is $1-Q_{1,s}(\tau_{k})$. If we pick the level to satisfy $\tau_k < \mu_1$, the theory of large deviation gives
\begin{equation}\label{eq:optarm_asymp}
    Q_{1,s}(\tau_k) \overset{s\to \infty}{\to} 1.
\end{equation}
Recall that $N_{1,s}(\tau_k) (= \frac{1}{Q_{1,s}(\tau_k)}-1)$ is the expected number of rounds to observe a not-under-estimated instance from resample mean distribution $\hat{\mu}_{1,s}$ given $s$ reward samples. The asymptotics in \eqref{eq:optarm_asymp} implies $N_{s}(\tau_k) \to 0$ as the number of pulls $s$ grows to infinity.
Thus, given the time horizon $T$, there exists a constant $s_0(T)$ such that 
$N_{s}(\tau_k) \le T^{-1}$
for all $s$ over $s_0(T)$. Consequently, the quantity $a_k$ in regret bound \eqref{eq:regret_upp_bound} is bounded by 
\begin{equation}
a_k 
\le 
1+\sum_{s=0}^{s_0(T)}E[\min\{N_{1,s}(\tau_k), T\}].
\end{equation}
The fact that constant $s_0(T)$ is of $O(\log T)$ order will be shown in lemmas \ref{lm:a_ks1}
and  \ref{lm:a_ks2}. For small number of pull $s < s_0(T)$, we show in lemma \ref{lm:a_ks} that $[\min\{N_{1,s}(\tau_k), T\}] \le 1.1 + (1-\frac{9}{4}\frac{\sigma^2}{\sigma_a^2})^{-\frac{1}{2}}$ for any $s$. Thus, it is enough to conclude that $a_k$ can be bounded by a term of $O(\log T)$ order.

\textbf{Roadmap of bounding $b_k$}

We provide asymptotic reasoning on bounding $b_k$ and defer the non-asymptotic analysis to lemma \ref{lm:b_ks1}. Recall that for a given constant level $\tau_k$, the probability of the  suboptimal arm $k$ being overestimated given $s$ reward samples is $Q_{k,s}(\tau_{k})$. If we pick the level to satisfy $\tau_k > \mu_k$, the theory of large deviation gives
\begin{equation}\label{eq:suboptarm_asymp}
    Q_{k,s}(\tau_k) \overset{s\to \infty}{\to} 0.
\end{equation}
Thus, given the time horizon $T$, there exists a constant $s_0(T)$ such that $Q_{k,s}(\tau_k) \le T^{-1}$ for all $s$ over $s_0(T)$. As a result, the event $\{Q_{k,s}(\tau_k) >\frac{1}{T}\}$ is empty if the number of pull $s$ is beyond $s_0(T)$.
Consequently,
the quantity $b_k$ in regret bound \eqref{eq:regret_upp_bound} is bounded by 
\begin{equation}
    b_k \le \sum_{s=0}^{s_0(T)}P(Q_{k,s}(\tau_k) >T^{-1})
\end{equation}
The fact that constant $s_0(T)$ is of $O(\log T)$ order will be shown in lemmas \ref{lm:b_ks2} and \ref{lm:b_ks3}. For small number of pull $s < s_0(T)$, we apply trivial bound $P(Q_{k,s}(\tau_k) >T^{-1})\le 1$ that holds for any $s$. Therefore, it is enough to conclude that $b_k$ can be bounded by a term of $O(\log T)$ order.

\section{Experiments} \label{sec:experiments}

We compare \texttt{ReBoot} to three baselines: \texttt{TS} \cite{agrawal2013further} (with $\mathcal N(0,1)$ prior), \texttt{Giro} \cite{kveton2018garbage}, and \texttt{PHE} \cite{kveton2019perturbed}. For all the experiments unless otherwise specified, we choose $a=1$ and $a=2.1$ for \texttt{Giro} and \texttt{PHE} respectively, as justified by the associated theory. All the results are averaged over $500$ runs.

\begin{figure}[t!]
\begin{center}
\centerline{\includegraphics[width=\columnwidth]{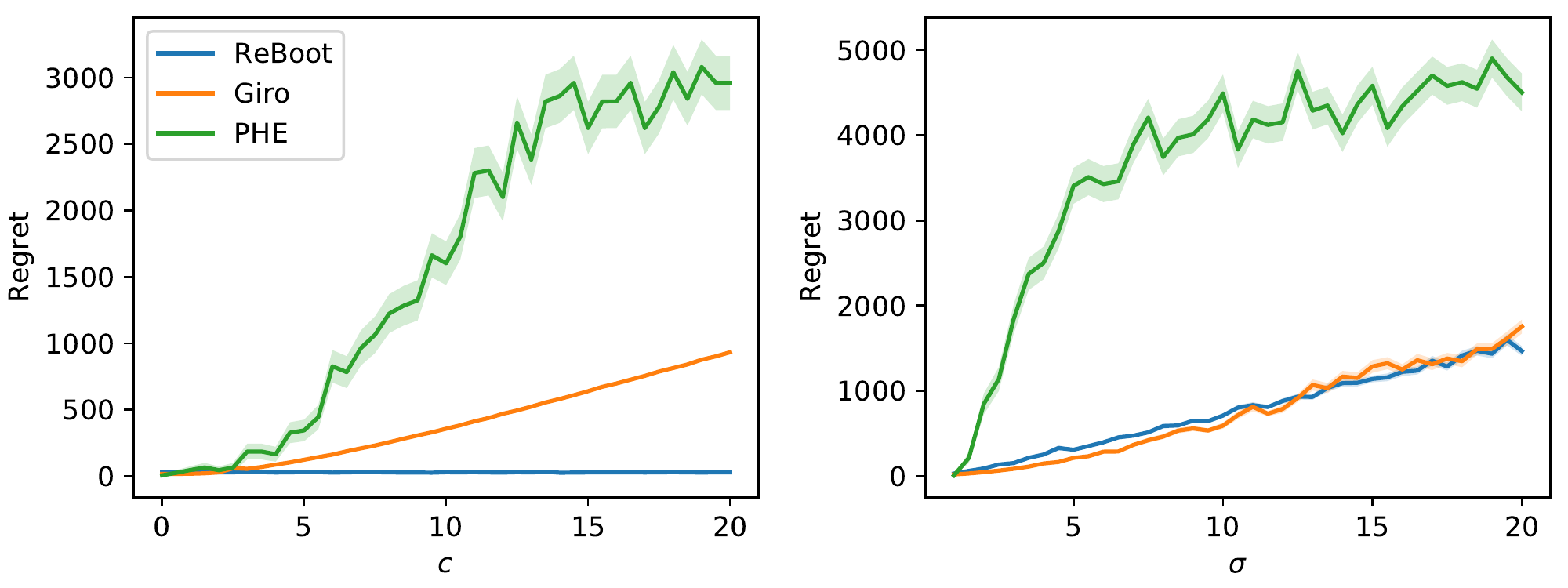}}
\caption{Comparison of \texttt{ReBoot} to \texttt{Giro} and \texttt{PHE} in $10{,}000$-round regret on Gaussian bandits with shifted mean (\textbf{left}) and varying variance (\textbf{right}). The error bars represent the standard error of the mean regret.}
\label{fig:exp1}
\end{center}
\vskip -0.4in
\end{figure}

\subsection{Robustness to Reward Mean/Variance}

\begin{figure*}[]
\vskip 0.2in
\begin{center}
\centerline{\includegraphics[width=\textwidth]{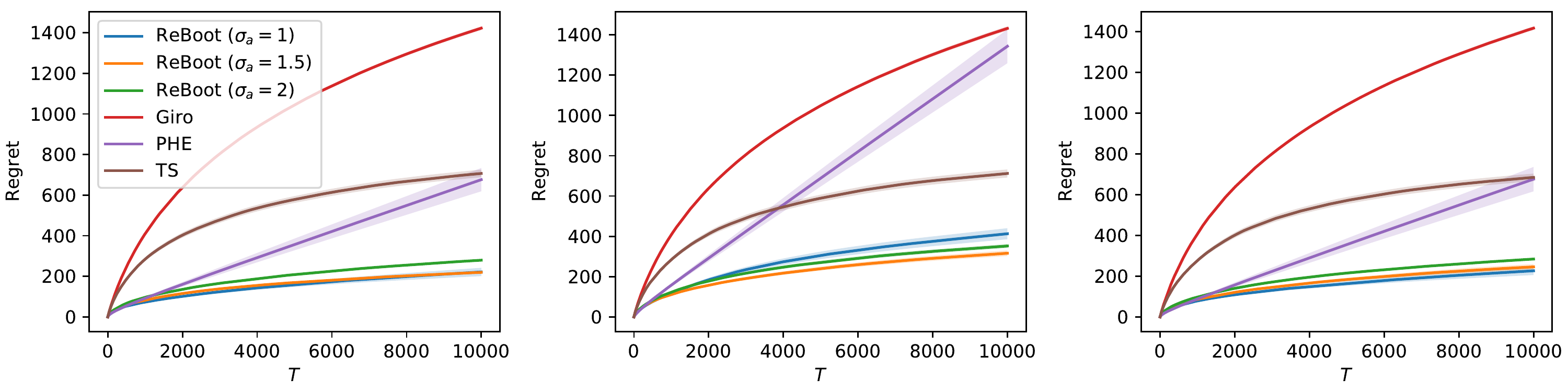}}
\caption{Comparison of \texttt{ReBoot} to \texttt{Giro} and \texttt{PHE} in the regret of the first $10{,}000$ rounds on Gaussian (\textbf{left}), exponential (\textbf{middle}), and logistic (\textbf{right}) bandits. The error bars represent the standard error of the mean regret.}
\label{fig:exp2}
\end{center}
\vskip -0.2in
\end{figure*}

We compare \texttt{ReBoot} with the two bounded bandit algorithms, \texttt{Giro} and \texttt{PHE}, on two classes of $2$-armed Gaussian bandit problems where $P_k = \mathcal N(\mu_k, \sigma)$, $k=1,2$. The first class has $(\mu_1,\mu_2)=(c+1,c)$ and $\sigma=1$, where $c$ varies from $0$ to $20$. The second class has $(\mu_1,\mu_2)=(1,0)$ and $\sigma$ varying from $1$ to $20$, where we choose $\sigma_a=1.5\sigma$ as guided by our theory. Figure \ref{fig:exp1} shows the effect of the shifted mean and the varying variance in three algorithms on the $10{,}000$-round regret.

From the left panel of Figure \ref{fig:exp1}, we see that \texttt{ReBoot} is robust to the increase in the mean rewards, while both \texttt{Giro} and \texttt{PHE} are sensitive. The reason is that when the mean rewards increase, the added pseudo rewards ($\{0,1\}$ for \texttt{Giro} and $\text{Ber}(0.5)$ for \texttt{PHE}) cannot represent the upper extreme value which is supposed to help with escaping from sub-optimal arms. From the right panel of Figure \ref{fig:exp1}, we observe a slow growth in the regret of \texttt{ReBoot} and \texttt{Giro} as the variance increases due to the raised problem difficulty level, while \texttt{PHE} is much more sensitive than \texttt{ReBoot} and \texttt{Giro} since the exploration in \texttt{PHE} completely relies on pseudo rewards which are inappropriate in varying variance but \texttt{ReBoot} and \texttt{Giro} mostly depends on bootstrap which yields more stable performance.

\subsection{Robustness to Reward Shape} \label{sec:exp3}

We compare \texttt{ReBoot} with \texttt{TS}, \texttt{Giro}, and \texttt{PHE} on three classes of $10$-armed bandit problems:
\begin{itemize}[noitemsep]
    \item $P_k = \mathcal N(\mu_k, 1)$ with $\mu_k\sim\text{Unif}(5,7)$;
    \item $P_k = \text{Exp}(\mu_k) + 5$ with mean $\mu_k\sim\text{Unif}(0,2)$;
    \item $P_k$ is a logistic distribution with mean $\mu_k\sim\text{Unif}(5,7)$ and variance $1$.
\end{itemize}
In each class, the mean reward takes values in $[5,7]$, and the variance is either $1$ for all arms (Gaussian and logistic) or varies in $(0,4]$ among arms (exponential). The regret of the first $10{,}000$ rounds is displayed in Figure \ref{fig:exp2}.

\texttt{ReBoot} has sub-linear and small regret when $\sigma_a=1.5$ in all cases, which validates our theory for Gaussian bandits and potential applicability in bandits of other distributions with even heteroscedasticity. We also see that $\sigma_a=1.5$ is a near-optimal choice for Gaussian bandits and exponential bandits (for logistic bandits, $\sigma_a=1$ is slightly better). The linear regret of \texttt{PHE} and the sub-linear but large regret of \texttt{Giro} in all cases are because the mean rewards are shifted away from $[0,1]$. \texttt{TS} cannot achieve its optimal performance without setting the prior with accurate knowledge of the reward distribution.

\begin{table}[t]
\caption{Run times of \texttt{ReBoot}, \texttt{TS}, \texttt{Giro}, and \texttt{PHE}.}
\label{tab:comp}
\vskip 0.15in
\begin{center}
\begin{small}
\begin{tabular}{ll|llll}
\toprule
\multicolumn{2}{c|}{Model} & \multicolumn{4}{c}{Run time (seconds)} \\
$K$           & $T$           & \texttt{TS}   & \texttt{Giro}   & \texttt{PHE}   & \texttt{ReBoot}  \\
\midrule
5            & 1k            & 0.030     & 0.174       &    0.051   & 0.029               \\
10            & 1k            & 0.030     & 0.280       & 0.052      & 0.029               \\
20            & 1k            & 0.030     & 0.491       & 0.052      & 0.029               \\
5            & 10k            & 0.330     & 3.903       & 0.541      & 0.344               \\
10            & 10k            & 0.326     & 4.979       & 0.545      & 0.319               \\
20            & 10k            & 0.325      & 7.065       & 0.551      & 0.315              \\
\bottomrule
\end{tabular}
\end{small}
\end{center}
\vskip -0.1in
\end{table}

\subsection{Computational Cost}

We compare the run times of \texttt{ReBoot} ($\sigma_a=1.5$), \texttt{TS}, \texttt{Giro}, and \texttt{PHE} in a Gaussian bandit. The settings consist of all combinations of $K\in\{5,10,20\}$ and $T\in\{1{,}000,10{,}000\}$. Our results are reported in Table \ref{tab:comp}. In all settings, the run times of \texttt{ReBoot}, \texttt{TS}, and \texttt{PHE} are all comparable, while the run time of \texttt{Giro} is significantly higher due to computationally expensive sampling with replacement over history with pseudo rewards. This comparison validates our analysis in Section \ref{sec:comp}.

\section{Conclusion}

In this work, we propose a new class of algorthm \texttt{ReBoot}: residual bootstrap based exploration mechanism. We highlight the limitation of directly using statistical bootstrap in bandit setting and develop a remedy procedure called variance inflation. We analyze \texttt{ReBoot} in an unbounded reward showcase (Gaussian bandit) and prove an optimal instance-dependent regret.

\newpage

\bibliography{example_paper}
\bibliographystyle{icml2020}
\clearpage

\onecolumn
\appendix
\begin{center}
    \Large Supplement to ``Residual Bootstrap Exploration for Bandit Algorithms''
\end{center}


Section \ref{Appe:main_regret} gives the proof of main regret bound (Theorem \ref{thm:GaussianMAB_main}). Section \ref{Appe:tech_lemmas} gives all technical lemmas required to bound the regret in section \ref{Appe:main_regret}. Section \ref{Appe:supp_lemmas} lists all supporting lemmas, including lower bound of Gaussian tail and concentration bound of chi-square distribution.

\section{Proof of Theorem \ref{thm:GaussianMAB_main}.} \label{Appe:main_regret}

\paragraph{Step 0: Notation and Preparation}

We restate the Theorem 1 in \cite{kveton2018garbage}:
\begin{theorem}\label{GiroThm1} 
Let $Q_{k,s}(\tau)=P(\hat{\mu}^{*}_{k,t}>\tau|\mathcal{H}_{1,s})$ for each arm $k \in [K]$.
For any $\tau_{k} \in \mathbb{R}$, the expected $T-$round regret of \textit{General randomized exploration} algorithm is bounded from above as 
\begin{equation}
    R(T)\le \sum_{k=2}^{K}\Delta_{k}(a_k+b_k),
\end{equation}
where
\begin{equation}
a_{k} = \sum_{s=0}^{T-1}E[\min\{Q_{1,s}(\tau_k)^{-1}-1, T\}]~~~~;~~~~
b_{k} =
\sum_{s=0}^{T-1}P(Q_{k,s}(\tau_k) > T^{-1})+1.
\end{equation}
\end{theorem}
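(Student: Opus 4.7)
The plan is to reproduce the short proof of \cite{kveton2018garbage}'s Theorem~1, which is what Theorem~\ref{GiroThm1} restates verbatim. Starting from the regret decomposition $R(T)=\sum_{k=2}^K \Delta_k E[T_{k,T}]$, fix a threshold $\tau_k\in(\mu_k,\mu_1)$ and use the partition
\begin{equation*}
\{I_t=k\}\subseteq \{I_t=k,\,\hat{\mu}^{*}_{k,t}>\tau_k\}\cup \{I_t=k,\,\hat{\mu}^{*}_{1,t}\le \tau_k\},
\end{equation*}
where the second case follows because $I_t=k$ combined with $\hat{\mu}^{*}_{k,t}\le\tau_k$ forces $\hat{\mu}^{*}_{1,t}\le\hat{\mu}^{*}_{k,t}\le\tau_k$. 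Summing over $t$ and taking expectations splits $E[T_{k,T}]$ into an ``arm-$k$ overestimation'' contribution to be bounded by $b_k$ and an ``arm-$1$ underestimation'' contribution to be bounded by $a_k$.

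For $b_k$, I would index arm $k$'s pulls by $s=T_{k,t-1}$ and use the tower property: conditional on $\mathcal{H}_{k,s}$, the probability that $\hat{\mu}^{*}_{k,t}>\tau_k$ on the $(s+1)$-th pull of arm $k$ equals $Q_{k,s}(\tau_k)$. The split $E[Q_{k,s}(\tau_k)]\le P(Q_{k,s}(\tau_k)>T^{-1})+T^{-1}$ and summation over $s=0,\dots,T-1$ (contributing $T\cdot T^{-1}=1$ from the small-probability regime) produce exactly $b_k=\sum_{s=0}^{T-1}P(Q_{k,s}(\tau_k)>T^{-1})+1$. For $a_k$, I would invoke an Agrawal--Goyal-style waiting-time argument: the fresh independent bootstrap perturbations of \texttt{ReBoot} make the sequence of indicators $\{\hat{\mu}^{*}_{1,t}>\tau_k\}$ along rounds in the interval $\{t:T_{1,t-1}=s\}$ conditionally iid Bernoulli$(Q_{1,s}(\tau_k))$, and the expected number of rounds in that interval on which simultaneously $I_t=k$ and $\hat{\mu}^{*}_{1,t}\le \tau_k$ is dominated by the geometric mean $Q_{1,s}(\tau_k)^{-1}-1$, truncated at $T$ for the boundary case where arm~$1$ is never pulled for the $(s+1)$-th time. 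Summing over $s$ and taking the outer expectation over $\mathcal{H}_{1,s}$ gives $a_k$, and combining with the $b_k$ bound yields $R(T)\le \sum_{k=2}^K\Delta_k(a_k+b_k)$.

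The subtle step is the geometric waiting-time domination for $a_k$. The interval between two consecutive pulls of arm~$1$ can be long because arm~$1$'s index may exceed $\tau_k$ without arm~$1$ itself being pulled (some other suboptimal arm could have an even larger index at the same round). Those ``wasted'' rounds are, however, charged to the overestimation counter $b_{k'}$ of the arm $k'$ that is actually pulled, so the rounds on which arm $k$ is specifically pulled while arm~$1$ is weak can be dominated by the geometric waiting time with parameter $Q_{1,s}(\tau_k)$ without double-counting. This is the one place in the argument where the per-round independence of the bootstrap perturbations is essential, and it is what makes $E[T_{k,T}]\le a_k+b_k$ hold cleanly for every suboptimal $k$; from here Theorem~\ref{GiroThm1} follows by multiplying by $\Delta_k$ and summing.
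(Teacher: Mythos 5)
First, a point of comparison: the paper does not prove this statement at all --- it is imported verbatim as Theorem~1 of Kveton et al.\ (2018) (``We restate the Theorem 1 in \dots''), so the only meaningful benchmark is that source's Agrawal--Goyal-style argument, whose skeleton you have correctly identified. Your regret decomposition, the inclusion $\{I_t=k\}\subseteq\{I_t=k,\hat{\mu}^{*}_{k,t}>\tau_k\}\cup\{I_t=k,\hat{\mu}^{*}_{k,t}\le\tau_k\}$, and the treatment of $b_k$ (indexing by $s=T_{k,t-1}$, the tower property, and the split on $\{Q_{k,s}(\tau_k)>T^{-1}\}$ contributing the additive $1$) are all sound.

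The gap is in $a_k$. The event you propose to dominate by a geometric waiting time is $\{I_t=k,\ \hat{\mu}^{*}_{1,t}\le\tau_k\}$, and for that event the domination is false: if arm $k$'s own index stays above $\tau_k$ throughout an interval of constant $T_{1,\cdot}=s$, arm $k$ can be pulled on every round of that interval while $\hat{\mu}^{*}_{1,t}\le\tau_k$ holds on a constant fraction of them, giving a count of order $(1-Q_{1,s}(\tau_k))\,T$ rather than $Q_{1,s}(\tau_k)^{-1}-1$. The quantity that \emph{is} geometrically dominated is the count of rounds with $I_t=k$ \emph{and} $\hat{\mu}^{*}_{k,t}\le\tau_k$ --- which is all your decomposition actually requires, since that event forces every index (arm $1$'s included) below $\tau_k$. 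Relatedly, your proposed mechanism (``wasted rounds are charged to $b_{k'}$'') is not the engine of the proof and does not close the argument; the engine is the per-round exchange inequality
$P\bigl(I_t=k,\ \hat{\mu}^{*}_{k,t}\le\tau_k \mid \mathcal{F}_{t-1}\bigr)\le \frac{1-Q_{1,s}(\tau_k)}{Q_{1,s}(\tau_k)}\,P\bigl(I_t=1\mid\mathcal{F}_{t-1}\bigr)$,
obtained by restricting to the event that arm $k$ attains the maximum index among arms $j\ne 1$ and that this maximum is at most $\tau_k$, and then using the conditional independence of arm $1$'s freshly bootstrapped index to compare the case $\hat{\mu}^{*}_{1,t}\le\tau_k$ (arm $k$ may be pulled) with the case $\hat{\mu}^{*}_{1,t}>\tau_k$ (arm $1$ is necessarily pulled). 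Summing over the interval, $I_t=1$ occurs at most once, which produces $Q_{1,s}(\tau_k)^{-1}-1$; the truncation at $T$ is the trivial bound. You should state and prove this exchange lemma explicitly; without it the waiting-time step is an assertion, not a proof.
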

We then use Theorem \ref{GiroThm1} to analyze the regret of Gaussian \texttt{ReBoot}.
\paragraph{Step 1: Bounding $a_k$.}
Recall $\bar{Y}_{1,s} = s^{-1}\sum_{i=1}^{s}Y_{1,i}$ and $\text{RSS}_{1,s}=\sum_{i=1}^{s}(Y_{1,i}-\bar{Y}_{1,s})^2$. 
We define event $A_{1,s} = \{\bar{Y}_{1,s}-\mu_1>-\Delta_{k}/4\}$ and $G_{1,s}=\{\text{RSS}_{1,s} \le \text{PRSS}_{s,\sigma_{a}}\}$. Let $r = \sigma_{a}/\sigma$ and hence $\text{PRSS}_{s,\sigma_{a}} = 2(s+2)r^2\sigma^2$.
Define $Q_{1,s}(\tau)\equiv P(\hat{\mu}^{*}_{1,t}>\tau|\mathcal{H}_{1,s})$ and 
set $N_{1,s}(\tau) = Q^{-1}_{1,s}(\tau)-1$. 
Define $\tau_k =  (\mu_1+\mu_k)/2$.
Set $a_{k,s} = E[\min\{N_{1,s}(\tau_k),T\}]$. Note that by Lemma \ref{lm:a_ks}, $a_{k,s} \le
M(r)
\equiv 
1.1
+(1-(3/(2r))^2)^{-1/2}$ for any $s>0$. Write $a_{k,s} = a_{k,s,1}+a_{k,s,2}+a_{k,s,3}$, where
\begin{eqnarray}
a_{k,s,1}&=&E[\min\{N_{1,s}(\tau_k),T\}I(A_{1,s})I(G_{1,s})],\label{a_ks1}\\
a_{k,s,2}&=&E[\min\{N_{1,s}(\tau_k),T\}I(A_{1,s}^{c})I(G_{1,s})], \label{a_ks2}\\
a_{k,s,3}&=&E[\min\{N_{1,s}(\tau_k),T\}I(G_{1,s}^{c})]. \label{a_ks3}
\end{eqnarray}

From Lemmas \ref{lm:a_ks1}, \ref{lm:a_ks2} and 
\ref{lm:a_ks3}, one has that, for any $s > \max\{s_{a,1}(T),s_{a,2}(T),s_{a,3}(T)\}$, $a_{k,s}\le 3T^{-1}$. Now, set $r>2^{-1/2}$, then $\max\{s_{a,1}(T), s_{a,2}(T), s_{a,3}(T)\} 
= 16 \max\{16(\sigma/\Delta_k)^2r^2, (2r^2-1)^{-1}\}\cdot \log T$.
\begin{eqnarray}
    a_{k}
    &\le& 
    M(r)\cdot 
    \max\{s_{a,1}(T), s_{a,2}(T), s_{a,3}(T)\}
    + (T-\max\{s_{a,1}(T), s_{a,2}(T), s_{a,3}(T)\})\cdot 3T^{-1}\\
    &\le&
    3+ 16M(r) \max\{16(\sigma/\Delta_k)^2r^2, (2r^2-1)^{-1}\}\cdot \log T.
\end{eqnarray}

\paragraph{Step 2: Bounding $b_k$.}

Recall that $\bar{Y}_{k,s} = s^{-1}\sum_{i=1}^{s}Y_{k,i}$ and $\text{RSS}_{k,s}=\sum_{i=1}^{s}(Y_{k,i}-\bar{Y}_{k,s})^2$.
We define events $A_{k,s} = \{\bar{Y}_{k,s}-\mu_k<\Delta_{k}/4\}$ and $G_{k,s}=\{\text{RSS}_{k,s} \le \text{PRSS}_{s,\sigma_{a}}\}$. Let $r = \sigma_{a}/\sigma$ and hence $\text{PRSS}_{s,\sigma_{a}} = 2(s+2)r^2\sigma^2$.

Set $b_{k,s}=E[I(Q_{k,s}(\tau_k)>T^{-1})]$. Note the trivial bound $b_{k,s} \le 1$ for any $s>0$, which follows from the fact that $b_{k,s}$ is a probability.
Write $b_{k,s}=b_{k,s,1}+b_{k,s,2}+b_{k,s,3}$, where
\begin{eqnarray}
b_{k,s,1}&=&
E[I(Q_{k,s}(\tau_k)>T^{-1})I(A_{k,s})I(G_{k,s})],\label{b_ks1}\\
b_{k,s,2}&=&
E[I(Q_{k,s}(\tau_k)>T^{-1})I(A_{k,s}^{c})I(G_{k,s})],\label{b_ks2}\\
b_{k,s,3}&=&
E[I(Q_{k,s}(\tau_k)>T^{-1})I(G_{k,s}^{c})].\label{b_ks3}
\end{eqnarray}
From Lemmas \ref{lm:b_ks1}, \ref{lm:b_ks2} and 
\ref{lm:b_ks3}, one has that, for any $s > \max\{s_{b,1}(T),s_{b,2}(T),s_{b,3}(T)\}$, $b_{k,s}\le 2T^{-1}$. 
Now, set $r > 2^{-1/2}$, then $\max\{s_{b,1}(T), s_{b,2}(T), s_{b,3}(T)\} 
= 8 \max\{16(\sigma/\Delta_k)^2r^2, (2r^2-1)^{-1}\}\cdot \log T$.
\begin{eqnarray}
    b_{k}
    &\le& 1+
    \max\{s_{b,1}(T), s_{b,2}(T), s_{b,3}(T)\}
    + (T-\max\{s_{b,1}(T), s_{b,2}(T), s_{b,3}(T)\})\cdot 2T^{-1}\\
    &\le&
    3+ 8 \max\{16(\sigma/\Delta_k)^2r^2, (2r^2-1)^{-1}\}\cdot \log T.
\end{eqnarray}

\paragraph{Step 3: Bounding $T$-round regret $R(T)$.}
Combine the results we obtained at Step 1 and 2, one has
\begin{equation}
    a_{k}+b_{k}\le 
    6 + (8+16 M(r)) \max\{16(\sigma/\Delta_k)^2r^2, (2r^2-1)^{-1}\}\cdot \log T.
\end{equation}
Then put into Theorem \ref{GiroThm1} to have the claimed bound.
\section{Technical Lemmas}
\label{Appe:tech_lemmas}
\subsection{Lemmas on bounding $a_k$.}

\begin{lemma}[Bounding $a_{k,s}$ for any $s>0$]\label{lm:a_ks} Set $r>3/2$.
For any $s>0$, 
\begin{equation}
    a_{k,s}\le M(r)\equiv 1.1+(1-(3/2r)^2)^{-1/2}.
\end{equation}
\end{lemma}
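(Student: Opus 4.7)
The plan is to leverage the Gaussian bootstrap weights, which make $\hat\mu^{*}_{1,t}\mid\mathcal{H}_{1,s}$ exactly $N(\bar{Y}_{1,s},\sigma_s^{2})$ with $\sigma_s^{2}=(s+2)^{-2}(\text{RSS}_{1,s}+2(s+2)r^{2}\sigma^{2})$ where $r=\sigma_a/\sigma$. Because $\text{RSS}_{1,s}\ge 0$, the bound $\sigma_s\ge\sqrt{2}\,r\sigma/\sqrt{s+2}$ holds uniformly in the data, so $Q_{1,s}(\tau_k)=\bar\Phi((\tau_k-\bar{Y}_{1,s})/\sigma_s)$ can be controlled through a deterministic upper bound on the argument of $\bar\Phi$ whenever $\bar{Y}_{1,s}<\tau_k$, sidestepping any integration over the chi-square law of $\text{RSS}_{1,s}$.

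Using $\min\{N_{1,s}(\tau_k),T\}\le N_{1,s}(\tau_k)=1/Q_{1,s}(\tau_k)-1$, I will split the expectation based on the sign of $\bar{Y}_{1,s}-\tau_k$. With $Z=\sqrt{s}(\bar{Y}_{1,s}-\mu_1)/\sigma\sim N(0,1)$ and $\beta=\Delta_k/(2\sigma)$, on $\{Z\ge -\sqrt{s}\beta\}$ we have $\bar{Y}_{1,s}\ge\tau_k$, hence $Q_{1,s}(\tau_k)\ge 1/2$ and $N_{1,s}(\tau_k)\le 1$; this contributes at most $1$. On the under-estimation event $\{Z<-\sqrt{s}\beta\}$, the substitution $W=-Z-\sqrt{s}\beta>0$ combined with the deterministic variance bound yields $(\tau_k-\bar{Y}_{1,s})/\sigma_s\le \lambda_s W$ with $\lambda_s=\sqrt{(s+2)/s}/(\sqrt{2}\,r)$, and since $\phi(w+\sqrt{s}\beta)\le\phi(w)$ for $w\ge 0$ and $\beta\ge 0$, the contribution from this event is bounded by $\int_{0}^{\infty}\phi(w)/\bar\Phi(\lambda_s w)\,dw$.

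I will then evaluate this integral by invoking a sharp lower bound for the Gaussian upper tail $\bar\Phi$ from Appendix \ref{Appe:supp_lemmas} and completing the square in the exponent, which produces a Gaussian-type factor of order $(1-\lambda^{2})^{-1/2}$ when $\lambda<1$. Taking the supremum $\sup_{s\ge 1}\lambda_s^{2}=3/(2r^{2})$ (attained at $s=1$) and absorbing the multiplicative looseness of the Mills-type bound replaces the coefficient by $(3/(2r))^{2}$, which is strictly less than $1$ precisely when $r>3/2$---this is the origin of the hypothesis---yielding the term $(1-(3/(2r))^{2})^{-1/2}$ in $M(r)$. The additive slack from the Case A contribution together with the subleading terms of the tail lower bound combines into the additive $1.1$.

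The main obstacle will be extracting this precise constant. It requires choosing a Gaussian tail lower bound that is simultaneously (i) sharp enough for the $e^{\lambda^{2}w^{2}/2}$ factor to cancel cleanly against $\phi(w)$, (ii) amenable to closed-form Gaussian integration, and (iii) tight enough that the resulting integral scales as $(1-\lambda^{2})^{-1/2}$ rather than the weaker $(1-\lambda^{2})^{-1}$. Uniformity in $s$ is comparatively painless thanks to the deterministic lower bound on $\sigma_s$, which removes any chi-square integration over $\text{RSS}_{1,s}$ at the price of a small multiplicative slack absorbed into the gap between $\sqrt{3/2}/r$ and $3/(2r)$.
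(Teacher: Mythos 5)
Your proposal follows essentially the same route as the paper's proof: both exploit the conditional Gaussianity of $\hat{\mu}^{*}_{1,t}$ under Gaussian weights, lower-bound the conditional variance deterministically by dropping $\mathrm{RSS}_{1,s}$ (so that $(s+2)/(2s)\le 3/2$ produces the factor $\sqrt{3/2}/r$), and reduce the problem to bounding $E[\,\bar\Phi(\lambda h)^{-1}]$ for a standard normal $h$ via a Gaussian tail lower bound on the large-deviation regime plus a Gaussian integral (equivalently the $\chi^2_1$ moment generating function) yielding $(1-(3/(2r))^{2})^{-1/2}$. Your explicit split on the sign of $\bar{Y}_{1,s}-\tau_k$ and the shift inequality $\phi(w+\sqrt{s}\beta)\le\phi(w)$ are just a repackaging of the paper's replacement of $\tau_k$ by $\mu_1$, and the constant-chasing difficulties you anticipate are present in the paper's own argument as well.
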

\begin{proof}

Recall that $r = \sigma_{a}/\sigma$,  $\text{RSS}_{1,s}=\sum_{i=1}^{s}(Y_{1,i}-\bar{Y}_{1,s})^2$, $\text{PRSS}_{s, \sigma_{a}} = 2(s+2)\sigma_{a}^2=2(s+2)r^2\sigma^2$ and $\text{Var}(\hat{\mu}_{1,t}^{*}|\mathcal{H}_{1,s})=(s+2)^{-2}[\text{RSS}_{1,s}+\text{PRSS}_{s, \sigma_{a}}]$.
Note $[s^{-1}\sigma^2]^{-1/2}(\mu_1 - \bar{Y}_{1,s}) \sim N(0,1)$ (From Gaussian reward) and 
$[\text{Var}(\hat{\mu}_{1,t}^{*}|\mathcal{H}_{1,s})]^{-1/2}(\hat{\mu}_{1,t}^{*}-\bar{Y}_{1,s})| \mathcal{H}_{1,s} \sim N(0,1)$ (From Gaussian weight with proposed perturbation scheme). Set $g( \mu) 
= [\text{Var}(\hat{\mu}_{1,t}|\mathcal{H}_{1,s})]^{-1/2}(\mu-\bar{Y}_{1,s})$. Then we have $g(\hat{\mu}_{1,t}^{*})| \mathcal{H}_{1,s} \sim N(0,1)$. Set $h(\mu) = [\sigma^2/s]^{-1/2}(\mu_1 - \mu)$. Then $h(\bar{Y}_{1,s}) \sim N(0,1)$.

\paragraph{Step 1: Reduce to the tail of bootstrap mean of optimal arm\\}

Without loss of generality, assume arm 1 is optimal.
From $\tau_{k} \le \mu_1$, we have 
\begin{equation}
E[\min\{N_{1,s}(\tau_k),T\}]
\le E[Q_{1,s}(\tau_k)^{-1}]
\le E[P(\hat{\mu}_{1,t}^{*} > \tau_{k}|\mathcal{H}_{1,s})^{-1}]
\le 
E[P(\hat{\mu}_{1,t}^{*} > \mu_1|\mathcal{H}_{1,s})^{-1}]
\label{eq:BasicReduction}
\end{equation}

\paragraph{Step 2: Reduce conditional tail probability to the quantity $g(\hat{\mu}^{*}_{1,t})$\\}
We first analyze the tail  $Q_{1,s}(\mu_1) = P(\hat{\mu}_{1,t}^{*} > \mu_1|\mathcal{H}_{1,s})$. Based on normalization procedure, we have
\begin{equation}\label{eq:normalization_optarm}
P(\hat{\mu}_{1,t}^{*} \ge \mu_1|\mathcal{H}_{1,s})
=
P(g(\hat{\mu}_{1,t}^{*})\ge g(\mu_1)|\mathcal{H}_{1,s})
\end{equation}

We find an upper level of the cutoff point $g(\mu_1)$ through
\begin{eqnarray}
    g(\mu_1)
    &\le& 
    [(s+2)^{-2}\text{PRSS}_{s,\sigma_{a}}]^{-1/2}(\mu_1-\bar{Y}_{1,s})
    =
    [2sr^2/(s+2)]^{-1/2}
    [\sigma^2/s]^{-1/2}(\mu_1-\bar{Y}_{1,s})\\
    &\le& (3/2)^{1/2}r^{-1}
    h(\bar{Y}_{1,s}),
\end{eqnarray}
where the first inequality follows from $\text{RSS}_{1,s} \ge 0$ for any $s \ge 1$ and the second inequality follows from $(s+2)/(2s)\le 3/2$ for all $s\ge 1$. Put into \eqref{eq:normalization_optarm}, one has
\begin{equation}
P(\hat{\mu}_{1,t}^{*} \ge \mu_1|\mathcal{H}_{1,s})
\ge
P\bigg(g(\hat{\mu}_{1,t}^{*}) \ge [(3/2)^{1/2}r^{-1}]
    h(\bar{Y}_{1,s}) \bigg|\mathcal{H}_{1,s}\bigg).
\end{equation}

\paragraph{Step 3:Reduce the reciprocal of conditional tail probability \\}
Based on a lower bound of Gaussian distribution (Lemma \ref{cor:Gau_lower}) we have, by setting $t=[(3/2)^{1/2}r^{-1}]
    h(\bar{Y}_{1,s})$,
\begin{eqnarray}
P(\hat{\mu}_{1,t}^{*} \ge \mu_1|\mathcal{H}_{1,s})^{-1}
&\le&
P(g(\hat{\mu}_{1,t}^{*}) \ge [(3/2)^{1/2}r^{-1}]
    h(\bar{Y}_{1,s}) |\mathcal{H}_{1,s})^{-1}\\
&\le& 
\begin{cases}
    \exp([\frac{3}{2r}h(\bar{Y}_{1,s})]^2) &\text{on} \{h(\bar{Y}_{1,s}) \ge r\cdot \sqrt{4\pi/3}\}\\
    [1-\Phi(\sqrt{2\pi})]^{-1}
    &\text{on} \{0 < h(\bar{Y}_{1,s}) < r\cdot \sqrt{4\pi/3}\}
    \end{cases}.
\end{eqnarray}
The expectation has an upper bound 
\begin{equation}
E[P(\hat{\mu}_{1,t}^{*} \ge \mu_1|\mathcal{H}_{1,s})^{-1}]
\le E[\exp([\frac{3}{2r}h(\bar{Y}_{1,s})]^2)]
+
 [1-\Phi(\sqrt{2\pi})]^{-1}.
\end{equation}

Since $h(\bar{Y}_{1,s})\sim N(0,1)$, so $h(\bar{Y}_{1,s})^2 \sim \chi^2_1$. From the moment generating function of $\chi_1^2$, and recall from assumption that $r > 3/2$, one has
$$E[\exp([\frac{3}{2r}h(\bar{Y}_{1,s})]^2)]=\psi_{\chi^2_{1}}((3/(2r))^2) = (1-(3/(2r))^2)^{-1/2}.$$
Combining all together, we conclude that
\begin{equation}
a_{k,s}
=
E[\min\{N_{1, s}(\tau_k),T\}]
\le 1.1+ (1-(3/(2r))^2)^{-1/2}.
\end{equation}
\end{proof}

\begin{lemma}[Bounding $a_{k,s,1}$ at \eqref{a_ks1}]\label{lm:a_ks1}
For any $s \ge s_{a,1}(T) \equiv 256 r^2 (\sigma/\Delta_k)^2 \log T$,
\begin{equation}
    a_{k,s,1}\le T^{-1}.
\end{equation}
\end{lemma}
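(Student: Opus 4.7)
The plan is to exploit the conditional normality of $\hat{\mu}^{*}_{1,t}$ given $\mathcal{H}_{1,s}$ under Gaussian bootstrap weights, and show that once $s$ crosses the threshold $s_{a,1}(T)$, the favorable events $A_{1,s}$ and $G_{1,s}$ together force $Q_{1,s}(\tau_k)$ to be so close to $1$ that $N_{1,s}(\tau_k)$ is already smaller than $T^{-1}$ pointwise (no expectation cancellation is needed).

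First I would fix $\tau_k = (\mu_1 + \mu_k)/2$, so $\mu_1 - \tau_k = \Delta_k/2$. On the event $A_{1,s}$ one gets the deterministic lower bound $\bar{Y}_{1,s} - \tau_k \ge \Delta_k/2 - \Delta_k/4 = \Delta_k/4$. On the event $G_{1,s}$ the conditional variance satisfies
\begin{equation*}
\mathrm{Var}(\hat{\mu}^{*}_{1,t}\mid \mathcal{H}_{1,s})
= \frac{\mathrm{RSS}_{1,s}+\mathrm{PRSS}_{s,\sigma_a}}{(s+2)^2}
\le \frac{2\,\mathrm{PRSS}_{s,\sigma_a}}{(s+2)^2}
= \frac{4r^{2}\sigma^{2}}{s+2},
\end{equation*}
using $\mathrm{PRSS}_{s,\sigma_a}=2(s+2)r^{2}\sigma^{2}$ from \eqref{eq:prss}.

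Next, under the Gaussian weight setup of Section \ref{sec:comp}, $\hat{\mu}^{*}_{1,t}\mid \mathcal{H}_{1,s}$ is exactly $N(\bar{Y}_{1,s},\mathrm{Var}(\hat{\mu}^{*}_{1,t}\mid \mathcal{H}_{1,s}))$, so on $A_{1,s}\cap G_{1,s}$,
\begin{equation*}
Q_{1,s}(\tau_k)
= \Phi\!\left(\frac{\bar{Y}_{1,s}-\tau_k}{\sqrt{\mathrm{Var}(\hat{\mu}^{*}_{1,t}\mid \mathcal{H}_{1,s})}}\right)
\ge \Phi\!\left(\frac{\Delta_k\sqrt{s+2}}{8r\sigma}\right).
\end{equation*}
For $s\ge s_{a,1}(T)=256 r^{2}(\sigma/\Delta_k)^{2}\log T$, the argument of $\Phi$ is at least $2\sqrt{\log T}$, and the standard Gaussian tail bound $1-\Phi(x)\le e^{-x^{2}/2}$ gives $1-Q_{1,s}(\tau_k)\le T^{-2}$. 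Consequently $N_{1,s}(\tau_k)=Q_{1,s}(\tau_k)^{-1}-1 \le T^{-2}/(1-T^{-2})\le T^{-1}$ for $T\ge 2$, so $\min\{N_{1,s}(\tau_k),T\}\le T^{-1}$ pointwise on $A_{1,s}\cap G_{1,s}$, which yields $a_{k,s,1}\le T^{-1}\cdot P(A_{1,s}\cap G_{1,s})\le T^{-1}$.

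The main obstacle is essentially bookkeeping: every constant in $s_{a,1}(T)$ has to line up so that the normalized margin $(\bar{Y}_{1,s}-\tau_k)/\sqrt{\mathrm{Var}(\hat{\mu}^{*}_{1,t}\mid \mathcal{H}_{1,s})}$ meets the threshold $2\sqrt{\log T}$ required by the Gaussian tail estimate to deliver the $T^{-2}$ decay. The factor $256$ comes from multiplying the $1/4$ margin from $A_{1,s}$, the variance-inflation factor $4r^{2}$ from $G_{1,s}$, the denominator $(s+2)$, and the requirement that the tail bound output $T^{-2}$ rather than just $T^{-1}$ (to absorb the reciprocal in $N_{1,s}(\tau_k)$). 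Aside from this, no concentration argument is needed here—the work is all deterministic once we are on $A_{1,s}\cap G_{1,s}$—and all probabilistic heavy lifting is postponed to Lemmas \ref{lm:a_ks2} and \ref{lm:a_ks3}, which control the measures of $A_{1,s}^{c}$ and $G_{1,s}^{c}$.
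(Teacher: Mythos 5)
Your proposal is correct and follows essentially the same route as the paper's proof: fix $\tau_k=\mu_1-\Delta_k/2$, use $A_{1,s}$ for the deterministic margin $\bar{Y}_{1,s}-\tau_k\ge\Delta_k/4$, use $G_{1,s}$ to cap the conditional variance by $4r^2\sigma^2/(s+2)$, apply a Gaussian tail bound to get $1-Q_{1,s}(\tau_k)\le T^{-2}$, and convert this to $N_{1,s}(\tau_k)\le (T^2-1)^{-1}\le T^{-1}$. The only cosmetic difference is that you invoke the exact conditional normality via $\Phi$ while the paper writes the same estimate as a sub-Gaussian tail bound; the threshold $s_{a,1}(T)=256r^2(\sigma/\Delta_k)^2\log T$ and all constants line up identically.
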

\begin{proof}
We first note that $a_{k,s,1}\le  E[N_{1,s}(\tau_k)I(A_{1,s})I(G_{1,s})]$.

\textbf{Step 1: Reduce $N_{1,s}(\tau_k)$ to $1-Q_{1,s}(\tau_k)$.\\}
The first step is to notice that, if we find $s_{a,1}(T)$ such that $s>s_{a,1}(T)$ implies the event $G_{1,s} \cap A_{1,s}$ and the event $\{[1-Q_{1,s}(\tau_k)]> T^{-2}$\} are mutually exclusive, then 
$s>s_{a,1}(T)$ also implies  the event $G_{1,s} \cap A_{1,s}$ the event
$\{N_{1,s}(\tau_k)<T^{-1}\}$ are mutually exclusive. To see this, note that
$(T^2-1)^{-1}<T^{-1}$ for $T>1$, and hence
\begin{eqnarray*}
\{N_{1,s}(\tau_k)
< T^{-1}\}
&\supseteq&
\{N_{1,s}(\tau_k)
< (T^2-1)^{-1}\}\\
&=&
\{Q_{1,s}(\tau_k)^{-1}<1+(T^2-1)^{-1}\}\\
&=&
\{Q_{1,s}(\tau_k)^{-1}<(1-T^{-2})^{-1}\}\\
&=&
\{Q_{1,s}(\tau_k)>1-T^{-2}\}\\
&=&
\{[1-Q_{1,s}(\tau_k)]< T^{-2}\}.
\end{eqnarray*}

\textbf{Step 2: Find $s_{s,1}(T)$ such that $s>s_{a,1}(T)$ implies $[1-Q_{1,s}(\tau_k)]< T^{-2}$.\\}
Set $\tau_{k} = \mu_1-\Delta_{k}/2$. To find such $s_{a,1}(T)$ in Step 1, we recall  $Q_{1,s}(x)=P(\hat{\mu}^{*}_{1,t}>x|\mathcal{H}_{1,s})$ and hence
\begin{equation}
[1-Q_{1,s}(\tau_k)]=
P(\hat{\mu}^{*}_{1,t}-\bar{Y}_{1,s}<-[\bar{Y}_{1,s}-\tau_k]|\mathcal{H}_{1,s}).\label{eq:a_ks1_1}
\end{equation}
By the definition of the event $G_{1,s}$, one has $\text{Var}(\hat{\mu}^{*}_{1,t}|\mathcal{H}_{1,s})I(G_{1,s})\le 2(s+2)^{-2}\text{PRSS}_{s,\sigma_{a}} = 4r^2(s+2)^{-1}\sigma^2$ and hence equation \eqref{eq:a_ks1_1} on the event $G_{1,s}$ becomes
\begin{equation}
[1-Q_{1,s}(\tau_k)]I(G_{1,s})
\le 
\exp\bigg(-[4r^2(s+2)^{-1}\sigma^2]^{-1}\frac{[\bar{Y}_{1,s}-\tau_k]^2}{2}\bigg)I(G_{1,s}).\label{eq:a_ks1_2}
\end{equation}

We then recast the good event $A_{1,s}$ that the optimal arm $1$ is not seriously understimated
as
\begin{eqnarray*}
A_{1,s} &=& \{\bar{Y}_{1,s}-\mu_1>-\Delta_k/4\}\\
&=&\{\bar{Y}_{1,s}-\tau_k > \Delta_k/4\}\\
&=&
\{[\bar{Y}_{1,s}-\tau_k]^2>\Delta_k^2/16\}\\
&=&
\{[\bar{Y}_{1,s}-\tau_k]^2/2>\Delta_k/32\}\\
&=&
\{[4r^2(s+2)^{-1}\sigma^2]^{-1}([\tau_k - \bar{Y}_{1,s}]^2/2)>128^{-1}r^{-2}(\sigma/\Delta_k)^{-2}(s+2)\}.
\end{eqnarray*} Pick $s_{a,1}(T) = 256 r^2 (\sigma/\Delta_k)^2 \log T$ to have $128^{-1}r^{-2}(\sigma/\Delta_k)^{-2}(s+2) > 2\log T$
for any $s>s_{a,1}(T)$. Therefore, the equation \eqref{eq:a_ks1_2} on the good event $A_{1,s}$ becomes, for any $s>s_{a,1}(T)$,
\begin{equation}
[1-Q_{1,s}(\tau_k)]I(A_{1,s})I(G_{1,s})\le T^{-2}I(A_{1,s})I(G_{1,s}).
\label{eq:a_ks1_3}
\end{equation} 

\textbf{Step 3. Combine Step 2 into Step 1.\\}
The last result in Step 2 with the observation at Step 1 leads to a fact that 
$N_{1,s}(\tau_k)I(A_{1,s})I(G_{1,s})
\le T^{-1}I(A_{1,s})I(G_{1,s})$ for any $s>s_{a,1}(T)$. Thus, we conclude $$a_{k,s,1}\le E[T^{-1}I(A_{1,s})I(G_{1,s})]\le T^{-1}.$$
\end{proof}

\begin{lemma}[Bounding $a_{k,s,2}$ at \eqref{a_ks2}]\label{lm:a_ks2}
For any $s \ge s_{a,2}(T) \equiv 64(\sigma/\Delta_k)^2\log T$,
\begin{equation}
    a_{k,s,2}\le T^{-1}.
\end{equation}
\end{lemma}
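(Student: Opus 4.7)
\medskip

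\noindent\textbf{Proof proposal for Lemma \ref{lm:a_ks2}.} The quantity $a_{k,s,2}$ differs from $a_{k,s,1}$ in that the indicator of the good event $A_{1,s}$ is replaced by its complement. On $A_{1,s}^{c}$ the sample mean $\bar{Y}_{1,s}$ of the optimal arm lies far below $\mu_1$, so we cannot hope to control the conditional tail $Q_{1,s}(\tau_k)$ in any refined way; instead, the plan is to pay the trivial price by using $\min\{N_{1,s}(\tau_k),T\}\le T$ and to show that $A_{1,s}^{c}$ itself is exponentially rare once $s$ is of order $\log T$. Concretely, bounding $I(G_{1,s})\le 1$ as well, I get
\begin{equation}
a_{k,s,2}\;\le\; T\cdot P(A_{1,s}^{c})\;=\;T\cdot P\!\left(\bar{Y}_{1,s}-\mu_{1}\le -\Delta_{k}/4\right).
\end{equation}

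\noindent The second step uses Gaussianity of the rewards: since $Y_{1,i}\sim N(\mu_{1},\sigma^{2})$ i.i.d., the sample mean satisfies $\bar{Y}_{1,s}\sim N(\mu_{1},\sigma^{2}/s)$, so by the standard sub-Gaussian tail bound,
\begin{equation}
P\!\left(\bar{Y}_{1,s}-\mu_{1}\le -\Delta_{k}/4\right)\;\le\;\exp\!\Bigl(-\tfrac{s\,\Delta_{k}^{2}}{32\sigma^{2}}\Bigr).
\end{equation}
Requiring the right-hand side to be at most $T^{-2}$ gives the condition $s\,\Delta_{k}^{2}/(32\sigma^{2})\ge 2\log T$, which rearranges to exactly $s\ge 64(\sigma/\Delta_{k})^{2}\log T=s_{a,2}(T)$. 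Combining with the previous display yields $a_{k,s,2}\le T\cdot T^{-2}=T^{-1}$, as claimed.

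\noindent There is no serious obstacle here; the only design choice is where to split the slack between the prefactor $T$ (coming from the trivial bound on $N_{1,s}$) and the exponent in the Gaussian tail, and the threshold $64(\sigma/\Delta_{k})^{2}\log T$ is the value that makes these balance so that the product sits at $T^{-1}$. The Gaussian structure of the reward is essential in this step, since the analogous step in \texttt{Giro} would have used a Hoeffding-type bound for bounded rewards; here I can get the same rate directly because $\bar{Y}_{1,s}$ has an exact Gaussian distribution with variance $\sigma^{2}/s$. Note that this lemma does not interact with the residual bootstrap mechanism at all: control of $I(G_{1,s})$ and $\text{Var}(\hat\mu_{1,t}^{*}\mid\mathcal{H}_{1,s})$ was the job of Lemma \ref{lm:a_ks1} and Lemma \ref{lm:a_ks3}, and here the perturbation mechanism only enters through the definition of $N_{1,s}(\tau_{k})$ that is swept under the $\min\{\cdot,T\}$ truncation.
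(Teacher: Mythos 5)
Your proof is correct and follows essentially the same route as the paper: bound $\min\{N_{1,s}(\tau_k),T\}\le T$ and $I(G_{1,s})\le 1$, then kill $P(A_{1,s}^{c})$ with the Gaussian tail of $\bar{Y}_{1,s}\sim N(\mu_1,\sigma^2/s)$ and solve for the threshold. In fact your exponent $\exp\bigl(-s\Delta_k^2/(32\sigma^2)\bigr)$ is the standard tail bound and yields exactly $s_{a,2}(T)=64(\sigma/\Delta_k)^2\log T$, whereas the paper's displayed exponent drops the factor $1/2$ (writing $s\Delta_k^2/(16\sigma^2)$), a harmless slip since the stated threshold is consistent with the correct bound you derived.
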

\begin{proof}
Note $a_{k,s,2}\le E[T\cdot I(A_{1,s}^{c})]
=TP(\bar{Y}_{1,s}-\mu_1 < -\Delta_{k}/4)$.
From large deviation of \textit{lower} tail of Gaussian reward sample mean, we have $$P(A_{1,s}^{c})
=P(\bar{Y}_{1,s}-\mu_1 < -\Delta_{k}/4)\le \exp\bigg(-[s^{-1}\sigma^2]^{-1}[\Delta_k/4]^2\bigg).$$
Pick $s_{a,2}(T) =64(\sigma/\Delta_k)^2\log T$ to have $s > s_{a,2}(T)$ implies that $P(A_{1,s}^{c})\le T^{-2}$, and hence $a_{k,s,2}\le T^{-1}$.
\end{proof}

\begin{lemma}[Bounding $a_{k,s,3}$ at \eqref{a_ks3}]\label{lm:a_ks3}
For any $s \ge s_{a,1}(T) \equiv 16(2r^2-1)^{-1}\log T$,
\begin{equation}
    a_{k,s,3}\le T^{-1}.
\end{equation}
\end{lemma}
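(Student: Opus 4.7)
The plan is to reduce the bound on $a_{k,s,3}$ to a tail-probability estimate on $\text{RSS}_{1,s}$ and then invoke a chi-square concentration inequality. Recall that $a_{k,s,3} = E[\min\{N_{1,s}(\tau_k), T\} I(G_{1,s}^c)]$, so my first step is to apply the trivial bound $\min\{N_{1,s}(\tau_k), T\} \le T$, which gives
\begin{equation}
a_{k,s,3} \le T \cdot P(G_{1,s}^c) = T \cdot P\bigl(\text{RSS}_{1,s} > 2(s+2)r^2 \sigma^2\bigr).
\end{equation}
The rest of the argument is purely about estimating this Gaussian sample-variance tail, so the conditional/Bayesian structure of the algorithm falls out.

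Next I would use the fact that for Gaussian rewards $Y_{1,i} \sim N(\mu_1, \sigma^2)$, the quantity $\text{RSS}_{1,s}/\sigma^2$ has a chi-square distribution with $s-1$ degrees of freedom. Thus
\begin{equation}
P(G_{1,s}^c) = P\bigl(\chi^2_{s-1} > 2(s+2) r^2\bigr).
\end{equation}
The assumption $r > 1/\sqrt{2}$ (i.e., $2r^2 - 1 > 0$) is what makes the target threshold $2(s+2)r^2$ exceed the mean $s-1$ of the chi-square by a gap that grows \emph{linearly} in $s$: specifically, $2(s+2)r^2 - (s-1) = s(2r^2-1) + (4r^2 + 1)$. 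This linear gap is exactly the regime where the right tail of a chi-square decays exponentially in $s$, which is what we need to make the bound $s_{a,3}(T) = O(\log T)$ feasible.

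Then I would invoke a Laurent--Massart style chi-square tail bound of the form $P(\chi^2_n \ge n + t) \le \exp(-\min(t^2/(4n), t/4))$ (a version of which is presumably collected in the paper's Appendix \ref{Appe:supp_lemmas}). Since $t = s(2r^2-1) + O(1)$ is linear in $n = s-1$, we are in the sub-exponential ($t/4$) regime, giving an upper bound roughly of the form $\exp(-c \, s (2r^2-1))$ for a numeric constant $c$. Requiring this to be at most $T^{-2}$ and then multiplying by $T$ yields $a_{k,s,3} \le T^{-1}$ provided $s \gtrsim (2r^2 - 1)^{-1} \log T$; tracing the constants through the concentration inequality should land at the claimed threshold $16(2r^2-1)^{-1}\log T$.

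The main obstacle, as in the sibling lemmas \ref{lm:a_ks1} and \ref{lm:a_ks2}, is not conceptual but numerical: one must choose the right form of the chi-square tail inequality and absorb the lower-order terms $4r^2 + 1$ and the shift from $s$ to $s-1$ carefully enough to produce a clean constant of $16$. A secondary care point is that the bound must hold \emph{for all} $s \ge s_{a,3}(T)$, which means I should verify the inequality $2(s+2)r^2 - (s-1) \ge (2r^2-1) s$ (this is immediate since $4r^2 + 1 > 0$) before applying the exponential-tail estimate uniformly in $s$.
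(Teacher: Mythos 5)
Your proposal is correct and follows essentially the same route as the paper's proof: bound $\min\{N_{1,s}(\tau_k),T\}$ by $T$, reduce to the tail $P(\mathrm{RSS}_{1,s} > 2(s+2)r^2\sigma^2)$, use that $\mathrm{RSS}_{1,s}/\sigma^2 \sim \chi^2_{s-1}$, apply a sub-exponential chi-square concentration bound (the paper's Lemma~\ref{cor:chisquare_concentra}, with constant $8$ rather than your $4$) in the linear-deviation regime $t = (2r^2-1)s + (4r^2+1)$, and solve for $s$ to get the $O((2r^2-1)^{-1}\log T)$ threshold. The only difference is the exact constant in the tail inequality cited, which only affects the numeric prefactor $16$.
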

\begin{proof}
We first note $a_{k,s,3}\le E[T\cdot I(G_{1,s}^{c})]
=T\cdot P(\text{RSS}_{1,s}>
\text{PRSS}_{s,\sigma_{a}})$.
From Gaussian reward, the distribution of scaled RSS is chi-square distributed; that is, $\text{RSS}_{1,s}/\sigma^2\sim \chi^2_{s-1}$. 
Also note $\text{PRSS}_{s,\sigma_{a}}/\sigma^2 = 2(s+2)r^2$. From Lemma \ref{cor:chisquare_concentra}, the tail probability of residual sum of square is exponentially decaying as sub-exponential family as 
$$
P(\text{RSS}_{1,s}>
\text{PRSS}_{s,\sigma_{a}})
=
P(\text{RSS}_{1,s}/\sigma^2>
\text{PRSS}_{s,\sigma_{a}}/\sigma^2) 
=
P(\chi^2_{s-1}-(s-1)>2(s+2)r^2-(s-1))
=(\star)
$$
To continue, set $f(r,s) = [2r^2\frac{s+2}{s-1}-1]
(s-1)$, that
$$
(\star)
=
P\bigg(\chi^2_{s-1}-(s-1)>[2r^2\frac{s+2}{s-1}-1]
(s-1)\bigg)
\le \exp\bigg(-\frac{f(r,s)}{8}\min\big\{1,2r^2\frac{s+2}{s-1}-1\big\}\bigg)
$$
Given $r>0$, one has for all  $s>0$, 
$$
P\bigg(\chi^2_{s-1}-(s-1)>2(s+2)r^2\bigg)
\le 
\exp\bigg(-\frac{f(r,s)}{8} \bigg).
$$
Now, solve $f(r,s) > 16\log T$ to find $s_{b,3}(T)$. Note $f(r,s) 
= 2r^2(s+2) - (s-1)
= [2r^2 - 1]s
+[4r^2 + 1]$. So choose $s_{a,3}(T) = 16\log T/(2r^2-1)$ to have $s > s_{a,3}(T)$ implies
$P(\text{RSS}_{1,s} > \text{PRSS}_{s,\sigma_{a}})>T^{-2}$ and hence $a_{k,s,3}\le T^{-1}$.
\end{proof}

\subsection{Lemmas on bounding $b_k$.}

\begin{lemma}[Bounding $b_{k,s,1}$ at \eqref{b_ks1}]\label{lm:b_ks1}
For any $s \ge s_{b,1}(T) \equiv 128r^2(\sigma/\Delta_k)^2\log T$,
\begin{equation}
    b_{k,s,1}=0.
\end{equation}
\end{lemma}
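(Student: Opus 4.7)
The plan is to show that on the intersection of the good events $A_{k,s}$ and $G_{k,s}$, the conditional tail probability $Q_{k,s}(\tau_k)$ is already below $T^{-1}$ once $s$ reaches the stated threshold, so the indicator $I(Q_{k,s}(\tau_k)>T^{-1})$ vanishes pointwise and hence its expectation is zero.

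First I would control the conditional variance of the arm index on $G_{k,s}$. By the defining event, $\mathrm{RSS}_{k,s}\le \mathrm{PRSS}_{s,\sigma_a}$, so the closed-form expression \eqref{eq:cond_var_formula} gives
\begin{equation*}
\mathrm{Var}(\hat\mu^*_{k,t}\mid\mathcal H_{k,s})\,I(G_{k,s})
\;\le\;\frac{2\,\mathrm{PRSS}_{s,\sigma_a}}{(s+2)^2}\,I(G_{k,s})
\;=\;\frac{4r^2\sigma^2}{s+2}\,I(G_{k,s}),
\end{equation*}
using $\mathrm{PRSS}_{s,\sigma_a}=2(s+2)r^2\sigma^2$ and $\sigma_a=r\sigma$.

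Next I would quantify the gap between the cutoff $\tau_k=\mu_k+\Delta_k/2$ and $\bar Y_{k,s}$ on the event $A_{k,s}$. Since $A_{k,s}=\{\bar Y_{k,s}-\mu_k<\Delta_k/4\}$, we have $\tau_k-\bar Y_{k,s}>\Delta_k/4$ on this event. Because the Gaussian bootstrap weights make $\hat\mu^*_{k,t}\mid\mathcal H_{k,s}$ exactly Gaussian with mean $\bar Y_{k,s}$ (as noted in Section \ref{sec:comp}), I can apply the standard Gaussian Chernoff bound to obtain, on $A_{k,s}\cap G_{k,s}$,
\begin{equation*}
Q_{k,s}(\tau_k)\;\le\;\exp\!\Bigl(-\tfrac{(\tau_k-\bar Y_{k,s})^2}{2\,\mathrm{Var}(\hat\mu^*_{k,t}\mid\mathcal H_{k,s})}\Bigr)
\;\le\;\exp\!\Bigl(-\tfrac{(\Delta_k/4)^2\,(s+2)}{2\cdot 4r^2\sigma^2}\Bigr)
\;=\;\exp\!\Bigl(-\tfrac{(s+2)\Delta_k^2}{128\,r^2\sigma^2}\Bigr).
\end{equation*}

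The final step is to choose the threshold so that the right-hand side drops below $T^{-1}$. Solving $(s+2)\Delta_k^2/(128\,r^2\sigma^2)\ge\log T$ yields $s\ge 128\,r^2(\sigma/\Delta_k)^2\log T - 2$, which is implied by $s\ge s_{b,1}(T)\equiv 128\,r^2(\sigma/\Delta_k)^2\log T$. On this range of $s$, the product $I(Q_{k,s}(\tau_k)>T^{-1})\,I(A_{k,s})\,I(G_{k,s})$ is identically zero, so taking expectation gives $b_{k,s,1}=0$. There is no serious obstacle here; the argument is entirely a Gaussian Chernoff estimate, and the only content is that the variance inflation mechanism of \texttt{ReBoot} keeps the arm-index variance bounded above by $O(r^2\sigma^2/s)$ on the good event $G_{k,s}$, which is precisely what makes the concentration sharp enough to produce the stated logarithmic threshold.
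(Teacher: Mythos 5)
Your proof is correct and takes essentially the same route as the paper: bound the conditional variance of $\hat{\mu}^{*}_{k,t}$ by $4r^{2}\sigma^{2}/(s+2)$ on $G_{k,s}$, use the gap $\tau_{k}-\bar{Y}_{k,s}>\Delta_{k}/4$ on $A_{k,s}$, and apply the Gaussian tail (Chernoff) bound to force $Q_{k,s}(\tau_{k})\le T^{-1}$ once $s\ge s_{b,1}(T)$, so the indicator vanishes on the good event. The only cosmetic difference is that you invoke the tail bound after restricting to $A_{k,s}$ (where the deviation is positive), which is marginally cleaner than the paper's ordering but not a different argument.
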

\begin{proof}
We first recall  $b_{k,s,1}= E[I(Q_{k,s}(\tau_k)>T^{-1})I(A_{k,s})I(G_{k,s})]$, where 
$Q_{k,s}(x) = P(\hat{\mu}_{k,t}^{*} >x|\mathcal{H}_{k,s})$. Note $\tau_k = \mu_k + \Delta_k/2$. By the definition of the the event $G_{k,s}$,
\begin{eqnarray}
Q_{k,s}(\tau_k)I(G_{k,s})
&=&
P(\hat{\mu}_{k,t}^{*} - \bar{Y}_{k,s}>\tau_k-\bar{Y}_{k,s}|\mathcal{H}_{k,s})
I(G_{k,s})\label{eq:b_ks1_2}\\
&\le& 
\exp(-[2(s+2)^{-2}\text{PRSS}_{s,\sigma_{a}}]^{-1}((\tau_k- \bar{Y}_{k,s})^2/2))I(G_{k,s})\\
&=&
\exp\bigg(-[4r^2(s+2)^{-1}\sigma^2]^{-1}((\tau_k- \bar{Y}_{k,s})^2/2)\bigg)I(G_{k,s}). \label{eq:b_ks1_2}
\end{eqnarray}
We then recast the good event $A_{k,s} $ that the suboptimal arm $k$ is not seriously overestimated as
\begin{eqnarray*}
A_{k,s}
&=&\{\bar{Y}_{k,s}-\mu_k < \Delta_k/4\} \\
&=& 
\{\tau_k-\bar{Y}_{k,s}>\Delta_k/4\}\\
&=&
\{(\tau_k-\bar{Y}_{k,s})^2>\Delta_k^2/16\}\\
&=&
\{
[4r^2(s+2)^{-1}\sigma^2]^{-1}([\tau_k-\bar{Y}_{k,s}]^2/2)
>
128^{-1}r^{-2}(\sigma/\Delta_k)^{-2}(s+2)
\}.
\end{eqnarray*}
Pick $s_{b,1}(T) = 128r^2 (\sigma/\Delta_k)^2 \log T$ to have 
$128^{-1}r^{-2}(\sigma/\Delta_k)^{-2}(s+2)> \log T$
for any
$s > s_{b,1}(T)$. Therefore, equation \eqref{eq:b_ks1_2} on the event $A_{k,s}$ become \begin{equation}
Q_{k,s}(\tau_k)I(A_{k,s})I(G_{k,s})<T^{-1}I(A_{k,s})I(G_{k,s}).
\end{equation}
Thus, we conclude that, for any
$s > s_{b,1}(T)$,
\begin{equation}
b_{k,s,1}=
E[I(Q_{k,s}(\tau_k)>T^{-1})I(A_{k,s})I(G_{k,s})]=0.
\end{equation}
\end{proof}

\begin{lemma}[Bounding $b_{k,s,2}$ at \eqref{b_ks2}]\label{lm:b_ks2}
For any $s \ge s_{b,2}(T) \equiv 32(\sigma/\Delta_k)^2\log T$,
\begin{equation}
    b_{k,s,2}\le T^{-1} .
\end{equation}
\end{lemma}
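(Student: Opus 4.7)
The plan is to mimic the proof of Lemma~\ref{lm:a_ks2}, which handles the analogous event $A_{1,s}^c$ for the optimal arm. The only structural difference is that here the bad event is the upper-tail deviation of the sample mean of a \emph{suboptimal} arm from its expectation, rather than the lower-tail deviation of the optimal arm.

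First, I would drop the two indicators $I(Q_{k,s}(\tau_k) > T^{-1})$ and $I(G_{k,s})$, each bounded above by $1$, to get the trivial bound
\begin{equation*}
b_{k,s,2} \le E[I(A_{k,s}^c)] = P(A_{k,s}^c) = P\bigl(\bar{Y}_{k,s} - \mu_k \ge \Delta_k/4\bigr).
\end{equation*}
Under the Gaussian reward model, $\bar{Y}_{k,s} \sim N(\mu_k, \sigma^2/s)$, so the standard Gaussian upper-tail bound gives
\begin{equation*}
P\bigl(\bar{Y}_{k,s} - \mu_k \ge \Delta_k/4\bigr) \le \exp\!\left(-\frac{(\Delta_k/4)^2}{2\sigma^2/s}\right) = \exp\!\left(-\frac{s\,\Delta_k^2}{32\,\sigma^2}\right).
\end{equation*}

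Second, I would solve for $s$ such that this bound is at most $T^{-1}$. The condition $s\Delta_k^2/(32\sigma^2) \ge \log T$ is equivalent to $s \ge 32(\sigma/\Delta_k)^2 \log T$, which is exactly $s_{b,2}(T)$. Thus for every $s \ge s_{b,2}(T)$, $b_{k,s,2} \le T^{-1}$, as claimed.

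There is no real obstacle here: unlike the $G_{1,s}^c$ or $G_{k,s}^c$ cases, which require the chi-square concentration (Lemma~\ref{cor:chisquare_concentra}), and unlike $a_{k,s,1}$ and $b_{k,s,1}$, which require carefully combining the good event $G$ with a conditional Gaussian tail on $\hat{\mu}^{*}_{k,t}|\mathcal{H}_{k,s}$, the present lemma reduces entirely to a one-sided Gaussian deviation inequality for $\bar{Y}_{k,s}$. The only thing to double-check is the arithmetic that the exponent $(\Delta_k/4)^2/(2\sigma^2/s) = s\Delta_k^2/(32\sigma^2)$ yields the threshold $32(\sigma/\Delta_k)^2\log T$ announced in the statement, which it does.
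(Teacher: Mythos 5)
Your proposal is correct and follows essentially the same route as the paper's proof: drop the other indicators, reduce to the Gaussian upper-tail bound $P(\bar{Y}_{k,s}-\mu_k \ge \Delta_k/4) \le \exp(-s\Delta_k^2/(32\sigma^2))$, and solve for the threshold $s_{b,2}(T)=32(\sigma/\Delta_k)^2\log T$. The arithmetic and the resulting threshold match the paper exactly.
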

\begin{proof}
First we note 
$
b_{k,s,2} \le E[I(A_{k,s}^{c})]
= P(\bar{Y}_{k,s}-\mu_{k}>\Delta_k/4).
$
From Gaussian reward, the tail probability of empirical upper deviation is exponentially decaying as
$$P(\bar{Y}_{k,s}-\mu_{k} > \frac{\Delta_k}{4})\le \exp\bigg(-[s^{-1}\sigma^2]^{-1}\frac{(\Delta_k/4)^2}{2}\bigg).$$
Choose $s_{b,2}(T) = 32(\sigma/\Delta_{k})^2\log T$ to have $s > s_{b,2}(T)$ implies $P(\bar{Y}_{k,s}-\mu_k > \Delta_{k}/4)<T^{-1}$.
\end{proof}

\begin{lemma}[Bounding $b_{k,s,3}$ at \eqref{b_ks3}]\label{lm:b_ks3}
For any $s \ge s_{b,3}(T) \equiv 8(2r^2-1)^{-1}\log T$,
\begin{equation}
    b_{k,s,3}\le T^{-1}. 
\end{equation}
\end{lemma}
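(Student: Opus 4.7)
The plan is to mirror the strategy used in Lemma \ref{lm:a_ks3}, since $b_{k,s,3}$ is controlled by exactly the same bad event $G_{k,s}^{c}$ as $a_{k,s,3}$, but with a trivial indicator bound rather than a time-horizon multiplier. First I would observe that the indicator $I(Q_{k,s}(\tau_k) > T^{-1})$ is bounded by $1$, so
\begin{equation}
b_{k,s,3} \le E[I(G_{k,s}^{c})] = P\bigl(\text{RSS}_{k,s} > \text{PRSS}_{s,\sigma_{a}}\bigr).
\end{equation}
This replaces the factor of $T$ that appeared in the analogous step for $a_{k,s,3}$ and is the reason $s_{b,3}(T)$ ends up a factor of two smaller than $s_{a,3}(T)$.

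Next I would translate the tail event to chi-square form using Gaussianity of the rewards: $\text{RSS}_{k,s}/\sigma^{2} \sim \chi^{2}_{s-1}$, and by construction $\text{PRSS}_{s,\sigma_{a}}/\sigma^{2} = 2(s+2)r^{2}$. Centering by $s-1$ and applying the chi-square concentration inequality (Lemma \ref{cor:chisquare_concentra}, as cited in the proof of Lemma \ref{lm:a_ks3}) yields
\begin{equation}
P\bigl(\chi^{2}_{s-1} - (s-1) > 2(s+2)r^{2} - (s-1)\bigr) \le \exp\!\bigl(-f(r,s)/8\bigr),
\end{equation}
where $f(r,s) = (2r^{2}-1)s + (4r^{2}+1)$, exactly as in Lemma \ref{lm:a_ks3}. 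Under the standing assumption $r > 2^{-1/2}$ (so $2r^{2} - 1 > 0$), the second term is positive and we may absorb it.

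Finally I would solve $f(r,s)/8 \ge \log T$ for $s$: since $f(r,s) \ge (2r^{2}-1)s$, it suffices to require $s \ge 8\log T/(2r^{2}-1)$, which is exactly $s_{b,3}(T)$ as stated. For any such $s$, the tail bound gives $b_{k,s,3} \le T^{-1}$.

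I do not expect a substantive obstacle: the calculation is a direct reuse of the chi-square concentration argument from Lemma \ref{lm:a_ks3}, and the only point to be careful about is the bookkeeping with the constant (a factor of $16$ in $a_{k,s,3}$ versus $8$ here), which comes entirely from whether we have a $T$-multiplier in front of $P(G_{k,s}^{c})$ or not. No delicate use of the events $A_{k,s}$ or $Q_{k,s}$ is needed, since the indicator is simply bounded by one.
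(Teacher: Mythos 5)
Your proposal is correct and takes essentially the same route as the paper's proof: bound the indicator by one to get $b_{k,s,3}\le P(\text{RSS}_{k,s}>\text{PRSS}_{s,\sigma_a})$, convert to a centered $\chi^2_{s-1}$ tail, apply the concentration lemma with $f(r,s)=(2r^2-1)s+(4r^2+1)$, and solve $f(r,s)\ge 8\log T$. Your remark on the factor-of-two difference from $s_{a,3}(T)$ (no $T$-multiplier here) matches the paper's bookkeeping exactly.
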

\begin{proof}
We first note $b_{k,s,3}\le E[I(G_{k,s}^{c})]
=P(\text{RSS}_{k,s}>
\text{PRSS}_{s,\sigma_{a}})$.
From Gaussian reward, the distribution of scaled RSS is chi-square distributed. That is, $\text{RSS}_{k,s}/\sigma^2\sim \chi^2_{s-1}$. 
Also note $\text{PRSS}_{s,\sigma_{a}}/\sigma^2 = 2(s+2)r^2$. From Lemma \ref{cor:chisquare_concentra}, the tail probability of residual sum of square is exponentially decaying as sub-exponential family as 
$$
P(\text{RSS}_{k,s}>
\text{PRSS}_{s,\sigma_{a}})
=
P(\text{RSS}_{k,s}/\sigma^2>
\text{PRSS}_{s,\sigma_{a}}/\sigma^2) 
=
P(\chi^2_{s-1}-(s-1)>2(s+2)r^2-(s-1))
=(\star)
$$
Continue, set $f(r,s) = [2r^2\frac{s+2}{s-1}-1]
(s-1)$, that
$$
(\star)
=
P\bigg(\chi^2_{s-1}-(s-1)>[2r^2\frac{s+2}{s-1}-1]
(s-1)\bigg)
\le \exp\bigg(-\frac{f(r,s)}{8}\min\big\{1,2r^2\frac{s+2}{s-1}-1\big\}\bigg)
$$
Given $r>0$, one has for all  $s>0$, 
$$
P\bigg(\chi^2_{s-1}-(s-1)>2(s+2)r^2\bigg)
\le 
\exp\bigg(-\frac{f(r,s)}{8} \bigg).
$$
Now, solve $f(r,s) > 8\log T$ to find $s_{b,3}(T)$. Note $f(r,s) 
= 2r^2(s+2) - (s-1)
= [2r^2 - 1]s
+[4r^2 + 1]$. So choose $s_{b,3}(T) = 8\log T/(2r^2-1)$ to have $s > s_{b,3}(T)$ implies
$P(\text{RSS}_{k,s} > \text{PRSS}_{s,\sigma_{a}})<T^{-1}$.
\end{proof}


\section{Supporting Lemmas}
\label{Appe:supp_lemmas}
\begin{lemma}[Lower bound of Gaussian tail]\label{lm:Gaussian_lower_bound}
Set $Z \sim N(0,1)$. Then,
\begin{equation}
    P(Z \ge t) \ge
    \begin{cases}
    \exp(-\frac{3}{2}t^2) &\text{on} \{t \ge \sqrt{2\pi}\}\\
    1-\Phi(\sqrt{2\pi})
    &\text{on} \{0 < t < \sqrt{2\pi}\}
    \end{cases}.
\end{equation}
\end{lemma}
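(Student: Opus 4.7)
The plan is to split along the threshold $t = \sqrt{2\pi}$ and handle each regime with a different elementary tool.

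In the regime $0 < t < \sqrt{2\pi}$, the claim reduces to $P(Z \ge t) \ge 1 - \Phi(\sqrt{2\pi})$, which is an immediate consequence of the monotonicity of the standard normal survival function: since $t < \sqrt{2\pi}$, we have $P(Z \ge t) > P(Z \ge \sqrt{2\pi}) = 1 - \Phi(\sqrt{2\pi})$. Nothing more is needed on this branch.

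In the regime $t \ge \sqrt{2\pi}$, the substantive content is $P(Z \ge t) \ge e^{-3t^2/2}$. My plan is to start from the classical Mills-ratio lower bound for the Gaussian tail,
\[
P(Z \ge t) \;\ge\; \frac{t}{\sqrt{2\pi}\,(t^2+1)}\, e^{-t^2/2},
\]
which can be derived by integration by parts on $\int_t^\infty e^{-x^2/2}\,dx$ (or equivalently from the differential inequality satisfied by $\bar\Phi$). Dividing both sides by $e^{-3t^2/2}$, the lemma reduces to the one-variable inequality
\[
g(t) \;:=\; \frac{t\, e^{t^2}}{\sqrt{2\pi}\,(t^2+1)} \;\ge\; 1 \qquad \text{for all } t \ge \sqrt{2\pi}.
\]

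I would verify this last step by a two-line argument. First, show that $g$ is monotonically increasing on $[\sqrt{2\pi},\infty)$ by a direct differentiation: the $e^{t^2}$ factor in the numerator dominates the rational term, so the derivative is manifestly positive. Second, check the base case $g(\sqrt{2\pi}) \ge 1$, which after cancellation of the $\sqrt{2\pi}$ factors reduces to $e^{2\pi} \ge 2\pi + 1$; this is comfortably true since $e^{2\pi} > e^6 > 400$ while $2\pi + 1 < 8$. The expected obstacle is essentially cosmetic — keeping the algebra in the monotonicity step tidy — so I do not foresee any genuine difficulty once the Mills lower bound is invoked.
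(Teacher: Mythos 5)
Your proposal is correct and follows essentially the same route as the paper: both branches are handled identically (monotonicity of the survival function for $0<t<\sqrt{2\pi}$, and the classical Mills-ratio lower bound $P(Z\ge t)\ge \tfrac{t}{\sqrt{2\pi}(1+t^2)}e^{-t^2/2}$ for $t\ge\sqrt{2\pi}$). The only difference is cosmetic: the paper finishes by applying $1+t^2\le e^{t^2}$ to get $\tfrac{t}{1+t^2}\ge \sqrt{2\pi}\,e^{-t^2}$ directly, whereas you verify the equivalent inequality $g(t)\ge 1$ via monotonicity plus the base case $e^{2\pi}\ge 2\pi+1$; both verifications are valid.
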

\begin{proof}
From vanilla Gaussian lower bound
\begin{equation*}
P(Z \ge t) \ge 
\begin{cases}
    \frac{1}{\sqrt{2\pi}}
    \frac{t}{1+t^2}\exp(-\frac{1}{2}t^2) &\text{on} \{t \ge c\}\\
    1-\Phi(c)
    &\text{on} \{0 < t < c\}
    \end{cases},
\end{equation*}
Inequality $1+t^2 \le \exp(t^2)$ for all $t \ge 0$ implies $t/(1+t^2) \ge c \exp(-t^2)$ on $\{t \ge c\}$. Pick $c = \sqrt{2\pi}$ to have the claim.
\end{proof}

\begin{lemma}[Upper bound of reciprocal of Gaussian tail]\label{cor:Gau_lower}
Set $Z \sim N(0,1)$. Then, 
\begin{equation}
    P(Z \ge t)^{-1} \le
    \begin{cases}
    \exp(\frac{3}{2}t^2) &\text{on} \{t \ge \sqrt{2\pi}\}\\
    [1-\Phi(\sqrt{2\pi})]^{-1}
    &\text{on} \{0 < t < \sqrt{2\pi}\}
    \end{cases}.
\end{equation}
\end{lemma}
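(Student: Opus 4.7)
The statement is an immediate corollary of Lemma \ref{lm:Gaussian_lower_bound} obtained by taking reciprocals piecewise. The plan is therefore very short: invoke the piecewise lower bound on $P(Z \ge t)$ that was just established, verify that in each piece the right-hand side is strictly positive, and then flip the inequality by taking reciprocals.

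Concretely, first I would observe that for every $t > 0$ the tail $P(Z \ge t)$ is strictly positive, and moreover each of the two lower bounds $\exp(-\tfrac{3}{2}t^2)$ and $1-\Phi(\sqrt{2\pi})$ is positive ($\Phi(\sqrt{2\pi}) < 1$ since $\Phi$ strictly increases and has limit $1$). This positivity is what permits the reciprocal to be taken without changing meaning or signs, and it is the only "care" the argument requires.

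Next I would split on the cases used in Lemma \ref{lm:Gaussian_lower_bound}. On $\{t \ge \sqrt{2\pi}\}$, the lemma gives $P(Z\ge t) \ge \exp(-\tfrac{3}{2}t^2)$; since both sides are positive, reciprocating reverses the inequality and yields $P(Z \ge t)^{-1} \le \exp(\tfrac{3}{2}t^2)$. On $\{0 < t < \sqrt{2\pi}\}$, the same lemma gives $P(Z \ge t) \ge 1 - \Phi(\sqrt{2\pi})$, and reciprocating yields $P(Z \ge t)^{-1} \le [1 - \Phi(\sqrt{2\pi})]^{-1}$. Combining the two cases gives exactly the piecewise bound in the statement.

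There is no genuine obstacle here. The only subtlety is making sure the reciprocal step is justified, i.e., that we are not dividing by something that could be zero or negative; this is handled by the positivity check above. The substantive work was already done in Lemma \ref{lm:Gaussian_lower_bound}, where the vanilla Gaussian Mills-ratio lower bound was massaged using $1+t^2 \le \exp(t^2)$ and the cutoff $c=\sqrt{2\pi}$ to produce the clean $\exp(-\tfrac{3}{2}t^2)$ form; the present corollary simply restates that bound in the form needed to control $E[Q_{1,s}(\tau_k)^{-1}]$ in the proof of Lemma \ref{lm:a_ks}.
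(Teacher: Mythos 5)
Your proposal is correct and matches the paper's intent: the paper states this lemma as an immediate corollary of Lemma~\ref{lm:Gaussian_lower_bound} and omits the proof, and your argument—check positivity of each piecewise lower bound, then take reciprocals to reverse the inequalities—is exactly the omitted step. Nothing further is needed.
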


\begin{lemma}[Proposition 2.9 in \cite{wainwright2019high}]\label{lemma:subExp}
Suppose that $X$ is sub-exponential with parameters $(\nu, \alpha)$. Then
$$P(X-\mu \ge t) \le \exp(-\frac{t}{2\alpha}\min\{1, \frac{\alpha t}{\nu^2}\}).$$
\end{lemma}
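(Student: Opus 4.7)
The plan is to apply the standard Chernoff-plus-optimization argument to the moment generating function (MGF) bound that defines sub-exponentiality. Recall that by definition $X$ sub-exponential with parameters $(\nu,\alpha)$ satisfies
\begin{equation*}
E[\exp(\lambda(X-\mu))] \;\le\; \exp\!\left(\tfrac{\nu^{2}\lambda^{2}}{2}\right) \qquad \text{for all } |\lambda|<1/\alpha.
\end{equation*}
First I would apply Markov's inequality to $\exp(\lambda(X-\mu))$ for an arbitrary $\lambda\in[0,1/\alpha)$, which gives the Chernoff-type bound
\begin{equation*}
P(X-\mu\ge t) \;\le\; \exp(-\lambda t)\,E[\exp(\lambda(X-\mu))] \;\le\; \exp\!\left(-\lambda t+\tfrac{\nu^{2}\lambda^{2}}{2}\right).
\end{equation*}

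Next I would minimize the exponent $\psi(\lambda)=-\lambda t+\nu^{2}\lambda^{2}/2$ over the feasible interval $\lambda\in[0,1/\alpha)$. The unconstrained minimizer is $\lambda^{\star}=t/\nu^{2}$, and the analysis splits into two regimes. In the ``small deviation'' regime $t\le\nu^{2}/\alpha$ one has $\lambda^{\star}\le 1/\alpha$, so substituting $\lambda=\lambda^{\star}$ yields $P(X-\mu\ge t)\le \exp(-t^{2}/(2\nu^{2}))$. In the ``large deviation'' regime $t>\nu^{2}/\alpha$, the minimizer $\lambda^{\star}$ lies outside the feasible set, so the infimum on $[0,1/\alpha)$ is attained at the boundary; letting $\lambda\uparrow 1/\alpha$ gives $P(X-\mu\ge t)\le \exp(-t/\alpha+\nu^{2}/(2\alpha^{2}))$, and the constraint $t>\nu^{2}/\alpha$ forces $\nu^{2}/(2\alpha^{2})<t/(2\alpha)$, so this is at most $\exp(-t/(2\alpha))$.

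Combining the two regimes yields
\begin{equation*}
P(X-\mu\ge t) \;\le\; \exp\!\Big(-\min\!\big\{\tfrac{t^{2}}{2\nu^{2}},\;\tfrac{t}{2\alpha}\big\}\Big),
\end{equation*}
which is exactly the stated bound after a one-line rearrangement: $\tfrac{t}{2\alpha}\cdot 1 = \tfrac{t}{2\alpha}$ and $\tfrac{t}{2\alpha}\cdot\tfrac{\alpha t}{\nu^{2}} = \tfrac{t^{2}}{2\nu^{2}}$, so $\tfrac{t}{2\alpha}\min\{1,\tfrac{\alpha t}{\nu^{2}}\}=\min\{\tfrac{t}{2\alpha},\tfrac{t^{2}}{2\nu^{2}}\}$. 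The only mildly delicate point is that the MGF bound holds on the open interval $|\lambda|<1/\alpha$, while in the large-deviation regime the optimum is pushed to the boundary $\lambda=1/\alpha$; this is resolved either by a continuity/limit argument on $\psi$ and the MGF, or by invoking an equivalent closed-interval formulation of sub-exponentiality. Aside from that boundary issue, the argument is a textbook Chernoff plus two-case optimization.
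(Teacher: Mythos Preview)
Your argument is correct and is exactly the standard Chernoff-plus-case-analysis proof of this sub-exponential tail bound. Note that the paper does not supply its own proof of this lemma: it is stated as a citation of Proposition~2.9 in \cite{wainwright2019high}, and the proof there follows precisely the route you outline.
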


\begin{lemma}[Concentration of Chi-Square distribution]\label{cor:chisquare_concentra}
Set $Y \sim \chi^2_{n}$. Then, 

\begin{equation}
    P(\chi^2_{n}-n \ge t)
    \le 
    \exp(-\frac{t}{8}\min\{1, \frac{t}{n}\})
\end{equation}
\end{lemma}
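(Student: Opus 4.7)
The plan is to realize the bound as a routine application of the sub-exponential tail inequality stated in Lemma \ref{lemma:subExp}, once the correct sub-exponential parameters for $\chi^2_n - n$ are identified. Writing $Y = \sum_{i=1}^n Z_i^2$ with $Z_i \stackrel{\text{iid}}{\sim} N(0,1)$ reduces the task to understanding the single summand $Z_1^2 - 1$ and then lifting the bound to the sum via independence.

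First I would compute (or invoke) the moment generating function of $\chi^2_1$, namely $E[\exp(\lambda Z_1^2)] = (1-2\lambda)^{-1/2}$ for $\lambda < 1/2$. Consequently, for the centered variable $X_1 = Z_1^2 - 1$,
\begin{equation}
\log E[\exp(\lambda X_1)] \;=\; -\tfrac{1}{2}\log(1-2\lambda) - \lambda.
\end{equation}
The key analytic step is to bound the right-hand side by $\tfrac{\nu^2 \lambda^2}{2}$ on a neighborhood $|\lambda| < 1/\alpha$ of the origin; expanding the logarithm as $\sum_{k\ge 2} (2\lambda)^k/(2k)$ and comparing with a geometric series yields
\begin{equation}
-\tfrac{1}{2}\log(1-2\lambda) - \lambda \;\le\; 2\lambda^2 \qquad \text{for } |\lambda| \le 1/4,
\end{equation}
so $X_1$ is sub-exponential with parameters $(\nu, \alpha) = (2, 4)$.

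Next I would use the standard fact that the sub-exponential class is stable under independent sums: if $X_1, \dots, X_n$ are independent with parameters $(\nu, \alpha)$, then $\sum_i X_i$ has parameters $(\nu\sqrt{n}, \alpha)$, which follows immediately from adding the per-summand bounds on the cumulant generating function. Applied to our setting this shows $\chi^2_n - n$ is sub-exponential with parameters $(2\sqrt{n}, 4)$.

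Finally I would plug these parameters into Lemma \ref{lemma:subExp}, obtaining
\begin{equation}
P(\chi^2_n - n \ge t) \;\le\; \exp\!\left(-\tfrac{t}{2\alpha}\min\!\left\{1, \tfrac{\alpha t}{\nu^2}\right\}\right) \;=\; \exp\!\left(-\tfrac{t}{8}\min\!\left\{1, \tfrac{t}{n}\right\}\right),
\end{equation}
which is exactly the claimed inequality. The only substantive calculation is the per-coordinate cumulant bound in the first paragraph; everything else is symbolic bookkeeping, so I expect no significant obstacle, and in particular no appeal to Lemma \ref{lm:Gaussian_lower_bound} is needed here.
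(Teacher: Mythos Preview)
Your proposal is correct and follows essentially the same route as the paper: identify that $\chi^2_n - n$ is sub-exponential with parameters $(\nu,\alpha)=(2\sqrt{n},4)$ and then plug into Lemma~\ref{lemma:subExp}. The only difference is that you spell out the derivation of these parameters via the single-summand cumulant bound and the independence-stability of the sub-exponential class, whereas the paper simply asserts the parameters (they are standard in \cite{wainwright2019high}); the final substitution and the resulting bound are identical.
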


\begin{proof}
Note that $\chi^2_{n}$ is sub-exponential with parameter $(\nu, \alpha) = (2\sqrt{n}, 4)$. Take $\nu^2/\alpha = n$ in Lemma \ref{lemma:subExp} to get
\begin{equation}
    P(\chi^2_n - n \ge t ) \le
    \begin{cases}
    \exp(-\frac{t^2}{8n})&\text{ if } 0 \le t \le n,\\
    \exp(-\frac{t}{8})&\text{ if }  t> n,
    \end{cases}
\end{equation}
and hence 
$P(\chi^2_{n}-n \ge t)
    \le 
    \exp(-\frac{t}{8}\min\{1, \frac{t}{n}\})$.
\end{proof}

\end{document}